\newtheorem{theorem}{Theorem}
\newtheorem{remark}{Remark}
\newtheorem{definition}{Definition}
\newtheorem{lemma}{Lemma}
\newtheorem*{theorem-proof}{Theorem}
\newcommand{\algo}{{CAIL}}
\def\ie{\emph{i.e., }}
\title{Confidence-Aware Imitation Learning \\ from Demonstrations with Varying Optimality}
\author{%
  Songyuan Zhang$^{1*}$, Zhangjie Cao$^2$\thanks{Equal contribution.}, Dorsa Sadigh$^{2}$, Yanan Sui$^1$ \\
  $^1$National Engineering Lab for Neuromodulation, SAE, Tsinghua University, China\\
  $^2$Department of Computer Science, Stanford University, USA\\
  \texttt{szhang21@mit.edu}, \texttt{\{caozj,dorsa\}@cs.stanford.edu}, \texttt{ysui@tsinghua.edu.cn} \\
}
\begin{document}

\maketitle

\begin{abstract}
Most existing imitation learning approaches assume the demonstrations are drawn from experts who are optimal, but relaxing this assumption enables us to use a wider range of data. Standard imitation learning may learn a suboptimal policy from demonstrations with varying optimality. Prior works use confidence scores or rankings to capture beneficial information from demonstrations with varying optimality, but they suffer from many limitations, \textit{e.g.}, manually annotated confidence scores or high average optimality of demonstrations. 
In this paper, we propose a general framework to learn from demonstrations with varying optimality that jointly learns the confidence score and a well-performing policy. 
Our approach, Confidence-Aware Imitation Learning (\algo) learns a well-performing policy from confidence-reweighted demonstrations, while using an outer loss to track the performance of our model and to learn the confidence.
We provide theoretical guarantees on the convergence of \algo\ and evaluate its performance in both simulated and real robot experiments.
Our results show that \algo\ significantly outperforms other imitation learning methods from demonstrations with varying optimality. We further show that even without access to any optimal demonstrations, \algo\ can still learn a successful policy, and outperforms prior work.
\end{abstract}

\section{Introduction}
\label{sec:intro}
We consider an imitation learning setting that learns a well-performing policy from a mixture of demonstrations with varying optimality ranging from random trajectories to optimal demonstrations. As opposed to standard imitation learning, where the demonstrations come from experts and thus are optimal, this benefits from a larger and more diverse source of data. Note that different from setting that the demonstrations are optimal but lack some causal factors~\cite{zhang2020causal}, in our setting, the demonstrations can be suboptimal.
However, this introduces a new set of challenges.
First, one needs to select useful demonstrations beyond the optimal ones.
We are interested in settings where we do not have sufficient expert demonstrations in the mixture so we have to rely on learning from sub-optimal demonstrations that can still be successful at parts of the task. Second, we need to be able to filter the negative effects of useless or even malicious demonstrations, \textit{e.g.}, demonstrations that implicitly fail the tasks.

To address the above challenges, we propose to use a measure of \emph{confidence} to indicate the likelihood that a demonstration is optimal. A confidence score can provide a fine-grained characterization of each demonstration's optimality. For example, it can differentiate between near-optimal demonstrations or adversarial ones.
By reweighting demonstrations with a confidence score, we can simultaneously learn from useful but sub-optimal demonstrations while avoiding the negative effects of malicious ones. So our problem reduces to learning an accurate confidence measure for demonstrations.
Previous work learns the confidence from manually annotated demonstrations~\cite{wu2019imitation}, which are difficult to obtain and might contain bias---For example, a conservative and careful demonstrator may assign lower confidence compared to an optimistic demonstrator to the same demonstration. 
In this paper, we remove restrictive assumptions on the confidence, and propose an approach that automatically learns the confidence score for each demonstration based on evaluation of the outcome of imitation learning. This evaluation often requires access to limited \emph{evaluation data}.

We propose a new algorithm, Confidence-Aware Imitation Learning (\algo), to jointly learn a well-performing policy and the confidence for every state-action pair in the demonstrations.
Specifically, our method adopts a standard imitation learning algorithm and evaluates its performance to update the confidence scores with an evaluation loss, which we refer to as the \emph{outer loss}. In our implementation, we use a limited amount of ranked demonstrations as our evaluation data for the outer loss.
We then update the policy parameters using the loss of the imitation learning algorithm over the demonstrations reweighted by the confidence, which we refer to as the \emph{inner loss}.
Our framework can accommodate any imitation learning algorithm accompanied with an evaluation loss to assess the learned policy.

We optimize for the inner and outer loss using a bi-level optimization~\cite{bard2013practical}, and prove that our algorithm converges to the optimal confidence assignments under mild assumptions. We further implement the framework using Adversarial Inverse Reinforcement Learning (AIRL)~\cite{fu2018airl} as the underlying imitation learning algorithm along with its corresponding learning loss as our inner loss. We design a ranking loss as the outer loss, which is compatible with the AIRL model and only requires easy-to-access ranking annotations rather than the exact confidence values.

The main contributions of the paper can be summarized as:
\begin{itemize} [leftmargin=20pt]
    \item We propose a novel framework, Confidence-Aware Imitation Learning (\algo), that jointly learns confidence scores and a well-performing policy from demonstrations with varying optimality. 
    \item We formulate our problem as a modified bi-level optimization with a pseudo-update step and prove that the confidence learned by \algo\ converges to the optimal confidence in $\mathcal{O}(1/\sqrt{T})$ ($T$ is the number of steps) under some mild assumptions.
    \item We conduct experiments on several simulation and robot environments. Our results suggest that the learned confidence can accurately characterize the optimality of demonstrations, and that the learned policy achieves higher expected return compared to other imitation learning approaches.
\end{itemize}

\section{Related Work}

\noindent \textbf{Imitation Learning.} 
The most common approaches for imitation learning are Behavioral Cloning (BC)~\cite{pomerleau1991efficient,bain1995framework,schaal1999BC,ross2011dagger,argall2009survey}, which treats the problem as a supervised learning problem, and
Inverse Reinforcement Learning (IRL), which recovers the reward function from expert demonstrations and finds the optimal policy through reinforcement learning over the learned reward ~\cite{abbeel2004apprenticeship,ramachandran2007bayesian,ziebart2008maxent-irl}.
More recently, Generative Adversarial Imitation Learning (GAIL)~\cite{ho2016gail} learns the policy by matching the occupancy measure between demonstrations and the policy in an adversarial manner~\cite{goodfellow2014gan}. Adversarial Inverse Reinforcement Learning (AIRL)~\cite{fu2018airl} and some other approaches~\cite{finn2016connection,henderson2018optiongan} improve upon GAIL by simultaneously learning the reward function, and the optimal policy. 
However, these approaches assume that all the demonstrations are expert demonstrations, and cannot learn a well-performing policy when learning from demonstrations with varying optimality. 

\noindent \textbf{Learning from Demonstrations with Varying Optimality: Ranking-based.} 
Ranking-based methods learn a policy from a sequence of demonstrations annotated with rankings~\cite{akrour2011preference,sugiyama2012preference,wirth2016model,burchfiel2016distance}.
T-REX learns a reward from the ranking of the demonstrations and learns a policy using reinforcement learning~\cite{brown2019extrapolating}. 
In our work, we assume access to rankings of a small subset of the demonstrations. The reward function learned from such a small number of rankings by T-REX may have low generalization ability to out of distribution states.
D-REX improves T-REX by automatically generating the rankings of demonstrations~\cite{brown2020better}, and SSRR further finds the structure of the reward function~\cite{chen2020learning}. 
These techniques automatically generate rankings under the assumption that a perturbed demonstration will have a lower reward than the original demonstration, which is not necessarily true for random or malicious demonstrations that can be present in our mixture.
DPS utilizes partial orders and pairwise comparisons over trajectories to learn and generate new policies~\cite{Novoseller2020DuelingPS}. However, it requires interactively collecting feedback, which is not feasible in our offline learning setting.

\noindent \textbf{Learning from Demonstrations with Varying Optimality: Confidence-based.}
Confidence-based methods assume each demonstration or demonstrator holds a confidence value indicating their optimality and then reweight the demonstrations based on this value for imitation learning. 
To learn the confidence, 2IWIL requires access to ground-truth confidence values for the demonstrations to accurately learn a confidence predictor~\cite{wu2019imitation}.
Tangkaratt \textit{et al.} require that all the actions for a demonstration are drawn from the same noisy distribution with sufficiently small variance~\cite{pmlr-v119-tangkaratt20a}.
IC-GAIL implicitly learns the confidence score by aligning the occupancy measure of the learned policy with the expert policy, but requires a set of ground-truth labels to estimate the average confidence~\cite{wu2019imitation}.
Following works relax the assumption of access to the ground-truth confidence, but still require more optimal demonstrations than non-optimal ones in the dataset~\cite{tangkaratt2020robust}. Other works require access to the reward of each demonstration~\cite{cao2021learning}.
All of these methods either rely on a specific imitation learning algorithm or require strong assumptions on the confidence. 
To move forward, we propose a general framework to jointly learn the confidence and the policy. 
Our framework is flexible as it can use any imitation learning algorithm as long as there exists a compatible outer loss, \ie the outer loss can evaluate the quality of the imitation learning model.

\section{Problem Setting}
We formulate the problem of learning from demonstrations with varying optimality as a Markov decision process (MDP): $\mathcal{M}= \langle \mathcal{S}, \mathcal{A}, \mathcal{T}, \mathcal{R}, \rho_0, \gamma \rangle$, where $\mathcal{S}$ is the state space, $\mathcal{A}$ is the action space, $\mathcal{T}: \mathcal{S} \times \mathcal{A} \times \mathcal{S}\rightarrow [0,1] $ is the transition probability, $\rho_0$ is the distribution of initial states, $\mathcal{R}: \mathcal{S} \times \mathcal{A} \rightarrow \mathbb{R}$ is the reward function, and $\gamma$ is the discount factor. 
A policy $\pi: \mathcal{S} \times \mathcal{A} \rightarrow [0,1]$ defines a probability distribution over the action space in a given state. The expected return, which evaluates the quality of a policy, can be defined as $\eta_{\pi} = \mathbb{E}_{s_0 \sim \rho_0,\pi}\left[\sum_{t=0}^{\infty}\gamma^{t} \mathcal{R}(s_{t}, a_{t})\right]$, where $t$ indicates the time step.

We aim to learn a policy that imitates the behavior of a demonstrator $d$ following policy $\pi^d$ who provides a set of demonstrations $\Xi=\{\xi_1,\dots, \xi_D\}$ and $\xi_i \sim \pi^d$. Each trajectory is a sequence of state-action pairs $\xi=\{s_0, a_0, \dots, s_N\}$, and the expected return of a trajectory is $\eta_\xi=\sum_{t=0}^{N-1}\gamma^{t}\mathcal{R}(s_t,a_t)$. 

A common assumption in classical imitation learning work is that the demonstrations are drawn from the expert policy $\pi^d = \pi^*$, \ie, the policy that optimizes the expected return of the MDP $\mathcal{M}$~\cite{ho2016gail,fu2018airl}. Here, we relax this assumption so that the demonstrations may contain non-expert demonstrations or even failures---drawn from policies other than $\pi^*$.
Given the demonstration set $\mathcal{D}$, we need to assess our \emph{confidence} in each demonstration. To achieve learning confidence over this mixture of demonstrations, we rely on the ability to evaluate the performance of imitation learning. This can be achieved by using an evaluation loss trained on \emph{evaluation data}, $\mathcal{D}_E$ (as shown in Fig.~\ref{fig:cail_framework}). In our implementation, we rely on a small amount of rankings between trajectories as our evaluation data: $\mathcal{D}_E = \eta_{\xi_1} \geq \cdots \geq \eta_{\xi_m}$.
To summarize, our framework takes a set of demonstrations with varying optimality $\mathcal{D}$ as well as a limited amount of evaluation data $\mathcal{D}_E$ along with an evaluation loss to find a well-performing policy. Note that unlike prior work~\cite{wu2019imitation}, we do not assume that optimal demonstrations always exist in the demonstration set, and \algo\ can still extract useful information from $\mathcal{D}$ while avoiding negative effects of non-optimal demonstrations.

\section{Confidence-Aware Imitation Learning}\label{sec:framework}

\begin{figure*}
\includegraphics[width=.95\textwidth]{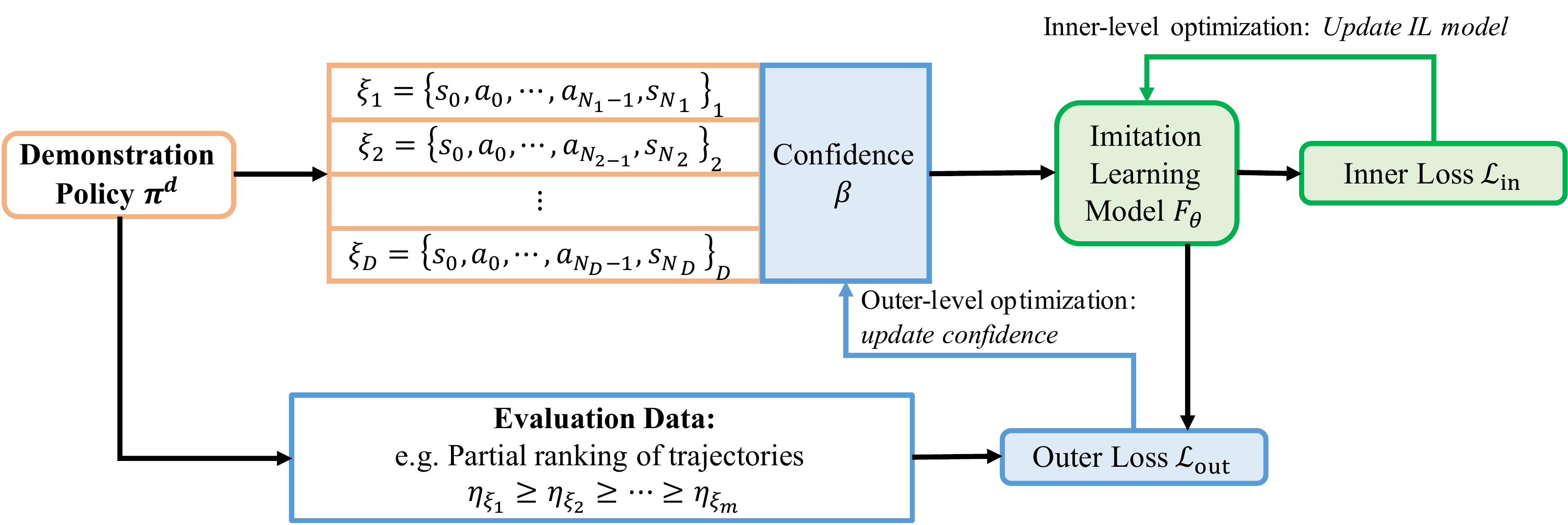}
\vspace{-5pt}
\caption{Confidence-Aware Imitation Learning. The demonstrations are shown in the orange box drawn from demonstration policy: $\xi_1,\dots,\xi_D \sim \pi^d$. The confidence learning component and the outer loss are shown in blue. The confidence $\beta$ reweights the distribution of state-action pairs in the demonstration set, and then the imitation learning model $F_\theta$ learns a well-performing policy and new parameters $\theta$ with the confidence-reweighted distribution using the inner loss (imitation loss) shown in green.
Next iteration, the updated $F_\theta$ generates new trajectories that are then evaluated by the outer loss and potentially other evaluation data (\textit{e.g.} partial ranking of trajectories) to update confidence.
}\label{fig:cail_framework}
\vspace{-15pt}
\end{figure*}

In our framework, we adopt an imitation learning algorithm with a model $F_\theta$ parameterized by $\theta$ and a corresponding imitation learning loss $\mathcal{L}_{\text{in}}$, which we refer to as inner loss (as shown in Figure~\ref{fig:cail_framework}). 
We assign each state-action pair a confidence value indicating the likelihood of the state-action pair appearing in the well-performing policy. The confidence can be defined as a function mapping from a state-action pair to a scalar value $\beta: \mathcal{S}\times \mathcal{A} \rightarrow \mathbb{R}$. We aim to find the optimal confidence assignments $\beta^*$ to reweight state-action pairs within the demonstrations. We then conduct imitation learning from the reweighted demonstrations using the inner imitation loss $\mathcal{L}_{\text{in}}$ to learn a well-performing policy. Here, we first define the optimal confidence $\beta^*$ and describe how to learn it automatically.

\noindent \textbf{Defining the Optimal Confidence.} We define the distribution of state-action pairs visited by a policy $\pi$ based on the occupancy measure $\rho_\pi: \mathcal{S}\times\mathcal{A}\rightarrow\mathbb{R}$: $\rho_\pi(s,a)=\pi(a|s)\sum_{t=0}^{\infty}\gamma^t P(s_t=s|\pi)$, which can be explained as the un-normalized distribution of state transitions that an agent encounters when navigating the environment with the policy $\pi$. We can normalize the occupancy measure to form the state-action distribution: $p_\pi(s,a)=\frac{\rho_\pi(s,a)}{\sum_{s,a}\rho_\pi(s,a)}$.
Recall that $\pi^d$ is the policy that the demonstrations are derived from, which can potentially be a mixture of different expert, suboptimal, or even malicious policies. We reweight the state-action distribution of the demonstrations to derive a new state-action distribution, which corresponds to another policy $\pi_\text{new}$: $p_{\pi_\text{new}}(s,a) =\beta(s,a) p_{\pi^d}(s,a).$
Our goal is to find the optimal confidence $\beta^*$ that ensures the derived policy $\pi_\text{new}$ maximizes the expected return:
\begin{equation}\label{eqn:optim_beta}
    \beta^*(s,a) = \arg\max_{\beta} \eta_{\pi_\text{new}}.
\end{equation}
With such $\beta^*(s,a)$, we can conduct imitation learning from the reweighted demonstrations to maximize the expected return with the provided demonstrations.

\noindent \textbf{Learning the Confidence.} We will learn an estimate of the confidence score $\beta$ without access to any annotations of the ground-truth values based on optimizing two loss functions: The inner loss and the outer loss. The inner loss $\mathcal{L}_{\text{in}}$ is accompanied with the imitation learning algorithm encouraging imitation, while the outer loss $\mathcal{L}_{\text{out}}$ captures the quality of imitation learning, and thus optimizing it finds the confidence value that maximizes the performance of the imitation learning algorithm. 
Specifically, we first learn the imitation learning model parameters $\theta^*$ that minimize the inner loss:
\begin{equation}\label{eq:inner_level}
\begin{aligned}
    \theta^* (\beta)&=\mathop{\arg\min}_{\theta}\mathbb{E}_{(s,a)\sim \beta(s,a) p_{\pi^d(s,a)}}\mathcal{L}_\text{in}(s,a;\theta, \beta) \\
\end{aligned}
\end{equation}
We note that the inner loss $\mathcal{L}_\text{in}(s,a;\theta,\beta)$ refers to settings where $(s,a)$ is sampled from the distribution $\beta(s,a) p_{\pi^d(s,a)}$, and hence implicitly depends on $\beta$. Thus we need to find the optimal $\beta^*$, which can be estimated by minimizing an outer loss $\mathcal{L}_\text{out}$:
\begin{equation}\label{eq:outer_level}
    \beta^*_\text{out}=\mathop{\arg\min}_{\beta}\mathcal{L}_\text{out}(\theta^*(\beta)).
\end{equation}
 This evaluates the performance of the underlying imitation learning algorithm with respect to the reward with limited evaluation data $\mathcal{D}_E$ (\textit{e.g.} limited rankings if we select a ranking loss as our choice of $\mathcal{L}_\text{out}$; which we will discuss in detail in Sec.~\ref{subsec:example}). 

\subsection{Optimization of Outer and Inner Loss}\label{sec:optimization} 
We design a bi-level optimization process consisting of an inner-level optimization and an outer-level optimization to simultaneously update the confidence $\beta$ and the model parameters $\theta$.
Within the outer-level optimization, we first pseudo-update the imitation learning parameters to build a connection between $\beta$ and the optimized parameters $\theta'$ with the current $\beta$. We then update $\beta$ to make the induced $\theta'$ minimize the outer loss $\mathcal{L}_\text{out}$ in Eqn.~\eqref{eq:outer_level}.
The inner-level optimization is to find the imitation learning model parameters that minimize inner loss $\mathcal{L}_\text{in}$ with respect to the confidence $\beta$. We introduce the details of the optimization below. We use $\tau$ to denote the number of iterations. Note that the losses in this section are all computed based on the expectation over states and actions.

\noindent \textbf{Outer-Level Optimization: Updating $\beta$.} 
Let $\beta_{\tau}$ be the confidence at time $\tau$. Using $\beta_\tau$, we first pseudo-update the imitation learning parameters $\theta$ using gradient descent. Here pseudo-update means that the update aims to compute the gradients of $\beta$ but does not really change the value of $\theta$.
Let $\theta'_0=\theta_\tau$ be the current imitation learning model parameters, and we update $\theta'$ as:
\begin{equation}\label{eq:pseudo_update}
\begin{aligned}
&\theta'_{t+1}=\theta'_t-\mu\nabla_{\theta'}\mathcal{L}_\text{in}(s,a;\theta'_t,\beta_{\tau}),
\end{aligned}
\end{equation}
where $\mu$ is the learning rate, $t$ is the pseudo-updating time step for $\theta'$. We will update $\theta'$ with respect to the fixed $\beta_{\tau}$ after convergence of Eqn.~\eqref{eq:pseudo_update}.
After updating $\theta'$, we now update $\beta$ using gradient descent with the outer loss $\mathcal{L}_{\text{out}}$ from Eqn.~\eqref{eq:outer_level}:
\begin{equation}\label{eq:update-conf}
    \beta_{\tau+1}=\beta_\tau-\alpha\nabla_{\beta}\mathcal{L}_\text{out}(\theta'),
\end{equation}
where $\alpha$ is the learning rate for updating $\beta$. Intuitively, updating $\beta$ as in Eqn.~\eqref{eq:update-conf} aims to find the fastest update direction of $\theta'$ for decreasing the outer loss $\mathcal{L}_\text{out}$. Though we compute gradients of gradients for $\beta$ here, $\beta$ is only a one-dimension scalar for each state-action pair and within each iteration of training, we only sample a mini-batch of thousands of state-pairs for update. Thus, within each iteration, the total dimension of $\beta$ is small and computing the gradient of gradient is not costly.

\noindent \textbf{Inner-Level Optimization: Updating $\theta$.} With the updated $\beta_{\tau+1}$, we now will update $\theta$ using gradient descent, where we denote the initialization as $\theta_0=\theta_\tau$. 
\begin{equation}\label{eq:update}
\begin{aligned}
&\theta_{t+1}=\theta_t-\mu\nabla_{\theta}\mathcal{L}_\text{in}(s,a;\theta_t,\beta_{\tau+1}).
\end{aligned}
\end{equation}
After convergence, we set $\theta_{\tau+1} = \theta$. 
With the two updates introduced above (outer and inner optimization), we finish one update iteration with setting $\beta_{\tau}$ to $\beta_{\tau+1}$ using the converged value from Eqn.~\eqref{eq:update-conf} and $\theta_{\tau}$ to $\theta_{\tau+1}$ using the converged value from Eqn.~\eqref{eq:update}.

In each iteration of the above optimization---in the steps of pseudo-updating and the steps of updating the imitation learning model---multiple gradient steps are required for convergence, meaning that there is a nested loop of gradient descent algorithms. The nested loop costs quadratic time and is inefficient especially for deep networks. To further accelerate the optimization, we propose an approximation, which only updates $\theta$ once in the pseudo-updating and the updating steps. Therefore, the new updating rule can be formalized as follows:
\begin{equation}\label{eq:update_beta_one_step}
\begin{aligned}
&\theta'_{\tau+1}=\theta_\tau-\mu\nabla_\theta\mathcal{L}_\text{in}(s,a;\theta_\tau,\beta_{\tau}), \\ 
&\beta_{\tau+1}=\beta_\tau-\alpha\nabla_{\beta}\mathcal{L}_\text{out}(\theta'_{\tau+1}), \\
&\theta_{\tau+1}=\theta_{\tau}-\mu\nabla_\theta\mathcal{L}_\text{in}(s,a;\theta_{\tau},\beta_{\tau+1}).
\end{aligned}
\end{equation}

\subsection{Theoretical Results}\label{subsec:theory}
We analyze the convergence of the proposed bi-level optimization algorithm for the \algo\ framework and derive the following theorems. We provide the detailed proofs of these theorems in Appendix.

\begin{theorem}\label{thm:convergence}
    (Convergence) Suppose the outer loss $\mathcal{L}_\text{out}$ is Lipschitz-smooth with constant $L$, the inequality
    \begin{equation}\label{eq:grad_satisfy}
        \nabla_{\theta}\mathcal{L}_\text{out}(\theta_{\tau+1})^\top\nabla_{\theta}\mathcal{L}_\text{in}(\theta_{\tau},\beta_{\tau+1})\geq C||\nabla_{\theta}\mathcal{L}_\text{in}(\theta_{\tau},\beta_{\tau+1})||^2
    \end{equation} holds for a constant $C\geq 0$ in every step $\tau$,\footnote{We remove $(s,a)$ in $\mathcal{L}_\text{in}$ for notation simplicity.} and the learning rate satisfies $\mu\leq\frac{2C}{L}$, then the outer loss decreases along with each iteration: $\mathcal{L}_\text{out}(\theta_{\tau+1})\leq\mathcal{L}_\text{out}(\theta_{\tau})$, and the equality holds if $\nabla_{\beta}\mathcal{L}_\text{out}(\theta_{\tau})=0$ or $\theta_{\tau+1} =\theta_{\tau}$.
\end{theorem}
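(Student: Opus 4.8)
The plan is to view the composite move $\theta_\tau\mapsto\theta_{\tau+1}$ produced by one outer step of Eqn.~\eqref{eq:update_beta_one_step} --- first updating $\beta_\tau\mapsto\beta_{\tau+1}$ and then taking a single inner gradient step $\theta_{\tau+1}=\theta_\tau-\mu\nabla_\theta\mathcal{L}_\text{in}(\theta_\tau,\beta_{\tau+1})$ --- as a single perturbation of $\theta$, and to certify that this perturbation decreases $\mathcal{L}_\text{out}$. Abbreviating $g_\tau:=\nabla_\theta\mathcal{L}_\text{in}(\theta_\tau,\beta_{\tau+1})$, the update reads exactly $\theta_\tau-\theta_{\tau+1}=\mu g_\tau$, so the whole statement reduces to bounding $\mathcal{L}_\text{out}(\theta_{\tau+1})-\mathcal{L}_\text{out}(\theta_\tau)$ from above by a nonpositive multiple of $\|g_\tau\|^2$.

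First I would use the $L$-smoothness of $\mathcal{L}_\text{out}$ in its two-sided form, $|\mathcal{L}_\text{out}(x)-\mathcal{L}_\text{out}(y)-\nabla_\theta\mathcal{L}_\text{out}(y)^\top(x-y)|\le\tfrac{L}{2}\|x-y\|^2$, and --- this is the one choice that matters --- apply it with base point $y=\theta_{\tau+1}$ and $x=\theta_\tau$, so that the outer gradient appears at $\theta_{\tau+1}$, exactly as in the hypothesis Eqn.~\eqref{eq:grad_satisfy}. Rearranging and substituting $x-y=\mu g_\tau$ yields
\begin{equation*}
\mathcal{L}_\text{out}(\theta_{\tau+1})\ \le\ \mathcal{L}_\text{out}(\theta_\tau)\ -\ \mu\,\nabla_\theta\mathcal{L}_\text{out}(\theta_{\tau+1})^\top g_\tau\ +\ \tfrac{L\mu^2}{2}\|g_\tau\|^2 .
\end{equation*}
Next I would plug the hypothesis $\nabla_\theta\mathcal{L}_\text{out}(\theta_{\tau+1})^\top g_\tau\ge C\|g_\tau\|^2$ into the cross term, which collapses the bound to $\mathcal{L}_\text{out}(\theta_{\tau+1})-\mathcal{L}_\text{out}(\theta_\tau)\le-\mu\big(C-\tfrac{L\mu}{2}\big)\|g_\tau\|^2$. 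Since $\mu>0$ and $C-\tfrac{L\mu}{2}\ge0$ precisely when $\mu\le 2C/L$, the right-hand side is nonpositive, giving the claimed monotonicity $\mathcal{L}_\text{out}(\theta_{\tau+1})\le\mathcal{L}_\text{out}(\theta_\tau)$.

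For the equality clause I would trace tightness. When $\mu<2C/L$ the prefactor $\mu(C-\tfrac{L\mu}{2})$ is strictly positive, so $\mathcal{L}_\text{out}(\theta_{\tau+1})=\mathcal{L}_\text{out}(\theta_\tau)$ forces $g_\tau=0$, i.e.\ $\nabla_\theta\mathcal{L}_\text{in}(\theta_\tau,\beta_{\tau+1})=0$, which by the inner update is the same as $\theta_{\tau+1}=\theta_\tau$; the converse ($\theta_{\tau+1}=\theta_\tau\Rightarrow$ equality) is immediate. For the other stated sufficient condition, $\nabla_\beta\mathcal{L}_\text{out}(\theta_\tau)=0$, I would argue that a vanishing confidence gradient makes the $\beta$-update in Eqn.~\eqref{eq:update_beta_one_step} stationary, $\beta_{\tau+1}=\beta_\tau$, at which point (a fixed point of the coupled iteration, where the inner gradient has also vanished) $g_\tau=\nabla_\theta\mathcal{L}_\text{in}(\theta_\tau,\beta_\tau)=0$ and hence again $\theta_{\tau+1}=\theta_\tau$, reducing to the first case.

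The main obstacle I anticipate is not the algebra but two matching issues: (i) being explicit that the ``$\theta_{\tau+1}$'' in the statement is the one-step surrogate of Eqn.~\eqref{eq:update_beta_one_step} (otherwise $\theta_\tau-\theta_{\tau+1}\ne\mu g_\tau$ exactly and a residual from the pseudo-update must be carried through the bound), and (ii) expanding the smoothness inequality around $\theta_{\tau+1}$ rather than $\theta_\tau$, so that Eqn.~\eqref{eq:grad_satisfy} applies verbatim; if one instead expands around $\theta_\tau$ one must bridge $\nabla_\theta\mathcal{L}_\text{out}(\theta_\tau)$ to $\nabla_\theta\mathcal{L}_\text{out}(\theta_{\tau+1})$ with an extra $L\mu\|g_\tau\|^2$ term (via Cauchy--Schwarz and $L$-smoothness), which only shrinks the admissible step-size range and is unnecessary. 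Everything else is routine bookkeeping.
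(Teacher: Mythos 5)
Your proof is correct and follows essentially the same one-step descent-lemma argument as the paper: bound $\mathcal{L}_\text{out}(\theta_{\tau+1})-\mathcal{L}_\text{out}(\theta_\tau)$ via $L$-smoothness, substitute $\theta_{\tau+1}-\theta_\tau=-\mu\nabla_\theta\mathcal{L}_\text{in}(\theta_\tau,\beta_{\tau+1})$, invoke the hypothesis \eqref{eq:grad_satisfy} on the cross term, and use $\mu\le 2C/L$ to make the coefficient nonpositive. The one place you genuinely diverge is also an improvement: the paper expands around $\theta_\tau$ (so the descent lemma produces $\nabla_\theta\mathcal{L}_\text{out}(\theta_\tau)$) and then silently rewrites that gradient as $\nabla_\theta\mathcal{L}_\text{out}(\theta_{\tau+1})$ across an equals sign in order to apply \eqref{eq:grad_satisfy}; your choice to use the two-sided smoothness bound with base point $\theta_{\tau+1}$ makes the hypothesis apply verbatim and removes that unjustified step, exactly as you anticipate in your closing remarks. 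On the equality clause, your handling of the case $\nabla_\beta\mathcal{L}_\text{out}(\theta_\tau)=0$ still needs the parenthetical assumption that the inner gradient has also vanished at the fixed point (since $\beta_{\tau+1}=\beta_\tau$ alone does not force $\theta_{\tau+1}=\theta_\tau$), but the paper's own argument for this case is no more rigorous, so this is a shared looseness of the statement rather than a defect of your proof.
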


\begin{remark}
The inequality in the assumption of Theorem~\ref{thm:convergence} (Eqn.~\ref{eq:grad_satisfy}) indicates that the directions of the gradients of $\mathcal{L}_{\text{out}}$ and $\mathcal{L}_{\text{in}}$ with respect to $\theta$ should be close. Intuitively only when the two gradient directions align, we can decrease the evaluation loss $\mathcal{L}_{\text{out}}$ by updating $\theta$ with $\mathcal{L}_{\text{in}}$. 
\end{remark}

Theorem~\ref{thm:convergence} ensures that the confidence and the imitation learning parameters monotonically decrease the outer loss.
When the gradient of the outer loss with respect to $\beta$ is zero, $\beta$ converges to the optimal confidence that minimizes the outer loss, \ie, $\beta^*$ in Eqn.~\eqref{eqn:optim_beta}. With the optimal confidence, we can learn a well-performing policy from more useful demonstrations by reweighting them. Thus, the learned imitation model induces lower outer loss (has higher-quality) than the imitation learning model learned from the original demonstrations in the dataset without reweighting.

\begin{theorem}\label{thm:rate}
    (Convergence Rate) Under the assumptions in Theorem~\ref{thm:convergence}, let
    \begin{equation}
    g(\theta, \beta)=\theta-\mu\nabla_\theta\mathcal{L}_\text{in}(s,a;\theta,\beta)
    \end{equation}
    We assume that $\mathcal{L}_\text{out}(g(\theta,\beta))$ is Lipschitz-smooth w.r.t. $\beta$ with constant $L_1$, $\mathcal{L}_\text{in}$ and $\mathcal{L}_\text{out}$ have $\sigma$-bounded gradients, and the norm of $\nabla_\beta \nabla_\theta \mathcal{L}_\text{in}(\theta;\beta)$ is bounded by $\sigma_1$. $L$ is the Lipschitz-smooth constant for $\mathcal{L}_\text{out}$ w.r.t. $g(\theta,\beta)$ as shown in Theorem~\ref{thm:convergence}. Consider the total training steps as $T$, we set $\alpha=\frac{C_1}{\sqrt{T}}$, for some constant $C_1$ where $0<C_1\le\frac{2}{L_1}$ and $\mu=\frac{C_2}{T}$ for some constant $C_2$. Then:
    \begin{equation}
        \min_{1\leq \tau\leq T}\mathbb{E}[||\nabla_\beta\mathcal{L}_\text{out}(\theta_\tau)||^2]\leq O\left(\frac{1}{\sqrt{T}}\right).
    \end{equation}
\end{theorem}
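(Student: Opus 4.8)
The plan is to establish the $O(1/\sqrt{T})$ bound by a standard descent-lemma-plus-telescoping argument, treating $\beta$ as the variable on which we are doing (stochastic) gradient descent and $\mathcal{L}_{\text{out}}(g(\theta_\tau,\beta_\tau))$ as the surrogate objective. First I would write $\Phi(\beta) := \mathcal{L}_{\text{out}}(g(\theta_\tau,\beta))$ and use the assumed $L_1$-Lipschitz-smoothness of $\Phi$ in $\beta$ to get the descent inequality
\begin{equation}
\Phi(\beta_{\tau+1}) \le \Phi(\beta_\tau) + \nabla_\beta\Phi(\beta_\tau)^\top(\beta_{\tau+1}-\beta_\tau) + \tfrac{L_1}{2}\|\beta_{\tau+1}-\beta_\tau\|^2 .
\end{equation}
Substituting the update $\beta_{\tau+1} = \beta_\tau - \alpha\nabla_\beta\mathcal{L}_{\text{out}}(\theta'_{\tau+1})$ from Eqn.~\eqref{eq:update_beta_one_step}, and noting that $\theta'_{\tau+1} = g(\theta_\tau,\beta_\tau)$ so the gradient used in the update is exactly $\nabla_\beta\Phi(\beta_\tau)$, this collapses to
\begin{equation}
\Phi(\beta_{\tau+1}) \le \Phi(\beta_\tau) - \alpha\Bigl(1-\tfrac{L_1\alpha}{2}\Bigr)\|\nabla_\beta\Phi(\beta_\tau)\|^2 .
\end{equation}
With $\alpha = C_1/\sqrt{T}$ and $C_1 \le 2/L_1$ the factor $1-L_1\alpha/2$ is bounded below by a positive constant (at least $1/2$ for $T$ large, or after absorbing constants), so each step decreases $\Phi$ by a multiple of $\alpha\|\nabla_\beta\Phi(\beta_\tau)\|^2$.

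Next I would reconcile the two objectives that appear: the descent is on $\mathcal{L}_{\text{out}}(g(\theta_\tau,\beta_{\tau+1}))$, whereas the quantity we want to control, $\|\nabla_\beta\mathcal{L}_{\text{out}}(\theta_\tau)\|^2$, and the quantity that naturally telescopes, $\mathcal{L}_{\text{out}}(\theta_\tau)$, refer to the actual iterate $\theta_\tau$ rather than the pseudo-updated one. This is where Theorem~\ref{thm:convergence} enters: its conclusion $\mathcal{L}_{\text{out}}(\theta_{\tau+1}) \le \mathcal{L}_{\text{out}}(\theta_\tau)$ (valid under $\mu \le 2C/L$, which holds since $\mu = C_2/T$ is eventually small) lets me bound the change in $\mathcal{L}_{\text{out}}$ caused by the inner update of $\theta$ by zero, so that the per-iteration decrease in the combined progress measure is at least the decrease from the $\beta$-step. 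I would also use the $\sigma$-bounded-gradient and $\sigma_1$-bounded-mixed-derivative hypotheses to control the discrepancy between $\nabla_\beta\mathcal{L}_{\text{out}}$ evaluated at $\theta_\tau$ versus at $g(\theta_\tau,\beta_\tau)$: since $g$ differs from identity by $\mu\nabla_\theta\mathcal{L}_{\text{in}}$, a chain-rule expansion shows $\nabla_\beta[\mathcal{L}_{\text{out}}(g(\theta,\beta))] = \nabla_\beta\mathcal{L}_{\text{out}}(\theta) - \mu(\nabla_\beta\nabla_\theta\mathcal{L}_{\text{in}})^\top\nabla_\theta\mathcal{L}_{\text{out}} + \dots$, and the correction term is $O(\mu\sigma_1\sigma) = O(1/T)$, hence negligible at the $1/\sqrt{T}$ scale.

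Then I would telescope: summing the descent inequality over $\tau = 1,\dots,T$ gives
\begin{equation}
\alpha\cdot c\sum_{\tau=1}^{T}\|\nabla_\beta\mathcal{L}_{\text{out}}(\theta_\tau)\|^2 \le \mathcal{L}_{\text{out}}(\theta_1) - \mathcal{L}_{\text{out}}(\theta_{T+1}) + (\text{error terms of order } T\mu\sigma\sigma_1),
\end{equation}
where $c>0$ is the absorbed constant and $\mathcal{L}_{\text{out}}$ is bounded below (being a loss). The right-hand side is thus $O(1)$ plus $O(T\cdot 1/T) = O(1)$, so $\sum_{\tau}\|\nabla_\beta\mathcal{L}_{\text{out}}(\theta_\tau)\|^2 \le O(1/\alpha) = O(\sqrt{T})$, and dividing by $T$ and passing to the minimum over $\tau$ yields $\min_{1\le\tau\le T}\|\nabla_\beta\mathcal{L}_{\text{out}}(\theta_\tau)\|^2 \le O(1/\sqrt{T})$; taking expectations (to handle the mini-batch stochasticity in the updates) gives the stated bound. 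The main obstacle I anticipate is the bookkeeping in the second paragraph — namely cleanly separating the effect of the $\theta$-update from the $\beta$-update on $\mathcal{L}_{\text{out}}$, and verifying that the mixed-derivative and bounded-gradient assumptions are exactly what is needed to make the pseudo-update-versus-true-iterate gap lower order; the choice $\mu = C_2/T$ (rather than $\mu = \Theta(1/\sqrt{T})$) is precisely what forces these cross terms to vanish faster than the leading term, and getting that dependence to line up is the delicate part.
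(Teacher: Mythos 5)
Your proposal is correct and follows essentially the same route as the paper's proof: a descent-lemma step in $\beta$ applied to the composite objective $\mathcal{L}_\text{out}(g(\theta,\beta))$, with the drift caused by the $\theta$-update absorbed as a per-step error of order $O(\mu\sigma^2)$ and $O(\alpha\mu\sigma\sigma_1)$ that totals $O(1)$ because $\mu=C_2/T$, followed by telescoping and division by $T\alpha=C_1\sqrt{T}$. The only difference is bookkeeping --- the paper anchors its two-term decomposition at $\theta_{\tau-1}$ and bounds the resulting cross term by $2\alpha\mu\sigma\sigma_1$ rather than invoking Theorem~\ref{thm:convergence} for the $\theta$-step --- but this does not change the substance of the argument.
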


\begin{remark}
The assumptions of Theorem~\ref{thm:rate} are Lipschiz-smoothness and bounded first-order and second-order gradients of $\mathcal{L}_{\text{in}}$ and $\mathcal{L}_{\text{out}}$, which are satisfied for typical $\mathcal{L}_{\text{in}}$ and $\mathcal{L}_{\text{out}}$ such as the cross-entropy loss of AIRL and the ranking loss in our implementation of CAIL in Section~\ref{subsec:example}.
\end{remark}

With the bound on the convergence rate, the gradient of the outer loss with respect to $\beta$ is gradually getting close to $0$, which means that $\beta$ gradually converges to the optimal confidence $\beta^*$ that minimizes the outer loss if $\mathcal{L}_\text{out}$ is convex with respect to $\beta$.

\subsection{An Implementation of \algo}\label{subsec:example}
To implement \algo, we need to adopt an imitation learning algorithm whose imitation loss will be the inner loss. 
We also need to design an outer loss on the imitation learning algorithm to evaluate the quality of imitation given some evaluation data $\mathcal{D}_E$ (\textit{e.g.} partial ranking annotations).

Based on the above considerations, as an instance of the implementation of \algo, we use Adversarial Inverse Reinforcement Learning (AIRL)~\cite{fu2018airl} as our imitation learning model. 
We use the imitation loss of AIRL as the inner loss, and a ranking loss (based on a partial ranking of trajectories) as the outer loss. 
AIRL and the ranking loss are compatible since AIRL can induce the reward function from the discriminator within the model, and the ranking loss can penalize the mismatches of the trajectory rankings computed by the induced reward function and the ground-truth rankings from the evaluation data $\mathcal{D}_E$. Furthermore, the implementation only requires the ranking of a subset of demonstrations $ \{\xi_i\}_{i=1}^{m}\subset\Xi$, \ie, $\mathcal{D}_E = \eta_{\xi_1}\geq \eta_{\xi_2}\geq\cdots \geq\eta_{\xi_m}$, which is much easier to access than the exact confidence value annotations~\cite{biyik2020learning,Novoseller2020DuelingPS} since confidence not only reflects the rankings of different demonstrations but also how much one demonstration is better than the other.

AIRL consists of a generator $G$ parameterized by $\theta_G$ as the policy, and a discriminator parameterized by $\theta_D$. The generator and the discriminator are trained in an adversarial manner as in~\cite{goodfellow2014gan} to match the occupancy measures of the policy and the demonstrations. We write the loss $\mathcal{L}_\text{in}$ as: 
\begin{equation}\label{eq:disc-loss}
    \begin{aligned}
    \mathcal{L}^D_\text{in}(s,a;\theta^D,\beta)
    =\mathbb{E}_{(s,a)\sim \beta(s,a)p_{\pi^d(s,a)}}[-\log D(s,a)]+\mathbb{E}_{(s,a)\sim \pi_{\theta^G}}[-\log(1-D(s,a))],
    \end{aligned}
\end{equation}
\begin{equation}\label{eq:generator-loss}
    \mathcal{L}^G_\text{in}(s,a;\theta^G)=\mathbb{E}_{(s,a)\sim \pi_{\theta^G}}[\log D(s,a)-\log(1-D(s,a))],
\end{equation}
where $\mathcal{L}^D_\text{in}$ is the inner loss for the discriminator, $\mathcal{L}^G_\text{in}$ is the inner loss for the generator and $\pi_{\theta^G}$ is the policy derived from the generator. The discriminator $D$ is learned by minimizing the loss $\mathcal{L}^D_\text{in}$, which aims to discriminates the state-action pair $(s,a)$ drawn from $\pi_{\theta^G}$ and the state-action pair $(s,a)$ drawn from $\pi^d$. The generator parameter $\theta^G$ is trained to minimize the loss $\mathcal{L}^G_\text{in}$, which enables the generator to generate state-action pairs that are similar to the state transitions in the demonstrations. 

For the outer loss, AIRL approximates the reward function by the discriminator parameters, \ie $\mathcal{R}'_{\theta^D}$. We compute $\eta'_{\xi_i}=\sum_{t=0}^{N}\gamma^t \mathcal{R}'_{\theta^D}(s_t,a_t)$ as the expected return of a trajectory using the reward $\mathcal{R}'_{\theta^D}$. Then we penalize the mismatches of the rankings derived by $\eta'_{\xi_i}$ and the ground-truth rankings: 
\begin{equation}\label{eq:ranking-loss}
    \begin{aligned}
    \mathcal{L}_\text{out}(\theta_D)=\sum_{i}\sum_{j>i}\mathrm{RK}\left[\eta'_{\xi_i},\eta'_{\xi_j};\mathbb{I}[\eta_{\xi_i}>\eta_{\xi_j}]\right],
    \end{aligned}
\end{equation}
where $\mathbb{I}[\eta_{\xi_i}>\eta_{\xi_j}]$ is $1$ if $\eta_{\xi_i}>\eta_{\xi_j}$ and otherwise is $-1$. $\mathrm{RK}$ is defined as a revised version of the widely-used margin ranking loss with margin as $0$:
\begin{equation}
    \mathrm{RK}\left[\eta'_{\xi_i};\eta'_{\xi_j},\eta_{\xi_i},\eta_{\xi_j}\right]
    =\left\{
    \begin{aligned}
    &\max(0,-\mathbb{I}[\eta_{\xi_i}>\eta_{\xi_j}](\eta'_{\xi_i}-\eta'_{\xi_j})), & |\eta'_{\xi_i}-\eta'_{\xi_j}| > \epsilon \\
    &\max(0, \frac{1}{4\epsilon}(\mathbb{I}[\eta_{\xi_i}>\eta_{\xi_j}](\eta'_{\xi_i}-\eta'_{\xi_j}) - \epsilon)^2), & |\eta'_{\xi_i}-\eta'_{\xi_j}| \le \epsilon\\
    \end{aligned}
    \right.
\end{equation}
We revised the original margin ranking loss within a $\epsilon$ range around the point of $(\eta'(\xi_i)-\eta'(\xi_j))=0$ to make it Lipschitz smooth. If we adopt small enough $\epsilon$, the functionality of the revised marginal ranking loss is close to the original one. In all the experiments, we use $\epsilon=10^{-5}$.

\section{Experiments}\label{sec:experiment}
In this section, we conduct experiments on the implementation of \algo\ in Sec.~\ref{subsec:example}. We verify the efficacy of the \algo\ in simulated and real-world environments. We report the results on various compositions of demonstrations with varying optimality. \textbf{The code is available on our \href{https://sites.google.com/view/cail}{\color{orange}website}\footnote{https://sites.google.com/view/cail}}

We conduct experiments in four environments including two MuJoCo environments (Reacher and Ant) \cite{todorov2012mujoco} in OpenAI Gym~\cite{gym}, one Franka Panda Arm\footnote{https://www.franka.de} simulation environment, and one real robot environment with a UR5e robot arm\footnote{https://www.universal-robots.com/products/ur5-robot}. For each environment, we collect a mixture of optimal and non-optimal demonstrations with different optimality to show the efficacy of \algo. We investigate the performance with respect to the optimality of demonstrations ranging from failures to near-optimal or optimal demonstrations. We provide the implementation details and more results on the sensitivity of the parameters, and visualize the learned confidence in the supplementary materials.

\noindent \textbf{Source of Demonstrations.} For MuJoCo environments, following the demonstration collecting method in~\cite{wu2019imitation}, we train a reinforcement learning algorithm and select four intermediate policies as policies with varying optimality and the converged policy as the optimal policy, so that the demonstrations range from worse-than-random ones to near-optimal ones. We draw $20\%$ of demonstrations from each policy. For the RL algorithm, we use SAC~\cite{haarnoja2018sac} for the Reacher environment and PPO~\cite{schulman2017ppo} for the Ant environment.
For the Franka Panda Arm simulation and the real robot environment with UR5e, we hand-craft demonstrations with optimality varying continuously from near-optimal ones to unsuccessful ones to approximate the demonstration collecting process from demonstrators with different levels of expertise. We label only $5\%$ of the demonstrated trajectories with rankings since we target realistic settings where only a small number of rankings are available for the demonstrations. 

\noindent \textbf{Baselines.} We compare \algo\ with the most relevant works in our problem setting including: the state-of-the-art standard imitation learning algorithms: GAIL~\cite{ho2016gail}, AIRL~\cite{fu2018airl}, imitation learning from suboptimal demonstration methods including two confidence-based methods, 2IWIL and IC-GAIL~\cite{wu2019imitation}, and three ranking-based methods, T-REX~\cite{brown2019extrapolating}, D-REX~\cite{brown2020better}, and SSRR~\cite{chen2020learning}. GAIL and AIRL learn directly from the mixture of optimal and non-optimal demonstrations. T-REX needs demonstrations paired with rankings, so we provide the same number of rankings as our approach. For D-REX and SSRR, we further generate rankings by disturbing demonstrations as done in their papers. For 2IWIL and IC-GAIL---that need a subset of demonstrations labeled with confidence---we label the subset of ranked demonstrations with evenly-spaced confidence, \ie, the highest expected return as confidence $1$, and the lowest expected return as $0$. This is a reasonable approximation of the confidence score with no prior knowledge available. For a fair comparison, we re-implement 2IWIL with AIRL as its backbone imitation learning method. For the RL algorithm in T-REX, D-REX, and SSRR, we also use PPO. DPS~\cite{Novoseller2020DuelingPS} requires interactively collecting demonstrations and the approach in Cao \textit{et al.}~\cite{cao2021learning} requires the ground truth reward of demonstrations, which are both not implementable under the assumptions in our setting, so we do not include them.

\subsection{Results}\label{sec:results}
\noindent \textbf{Reacher and Ant.} In the Reacher, the end effector of the arm is supposed to reach a final location. Figure~\ref{fig:illustration_reacher} shows the optimal trajectories of the joint and the end effector in green, which illustrates the policy reaching the location with the minimum energy cost, and the trajectories with lower optimality in red and orange, where the agent just spins around the center and wastes energy without reaching the target.
We collect $200$ trajectories in total, where each trajectory has $50$ interaction steps.

In Ant, the agent has four legs, each with two links and two joints. Its goal is to move in the x-axis direction as fast as possible.
Figure~\ref{fig:illustration_ant} illustrates the demonstrated trajectories, where green shows the optimal one, and red shows suboptimal trajectories (darker colors show lower optimality). In optimal demonstrations, the agent moves quickly along the x-axis, while in suboptimal ones, it moves slowly to other directions.
We collect trajectories with $200,\!000$ interaction steps in total.

As shown in Figure~\ref{fig:result_reacher} and~\ref{fig:result_ant}, \algo\ achieves the highest expected return compared to the other methods and experiences fast convergence. For Reacher, the p-value\footnote{All p-values are computed by the student's t-test and the null hypothesis is the performance of CAIL is equal to or smaller than the baseline methods.} between CAIL and the closest baseline method, T-REX, is $5.4054\times10^{-6}$ (statistically significant). For Ant, the p-value between CAIL and the closest baseline method, 2IWIL, is $0.1405$. \algo\ outperforms standard imitation learning methods, GAIL and AIRL, because \algo\ selects more useful demonstrations, and avoids the negative influence of harmful demonstrations. We observe that 2IWIL and IC-GAIL do not perform well because neighboring demonstrations in a given ranking are not guaranteed to have the same distance in terms of confidence score and thus the evenly-spaced confidence values derived from rankings are likely not accurate. 
All the ranking-based methods do not perform well. For T-REX, the potential reason can be that the rankings of a subset of demonstrations are not enough to learn a generalizable reward function covering states. For D-REX and SSRR, the automatically generated rankings can be incorrect since we also have unsuccessful demonstrations--- which can at times be worse than random actions---and perturbing such demonstrations is not guaranteed to produce demonstrations that imply rankings.

\begin{figure}[ht]
\vspace{-12pt}
    \centering
    \subfigure[Reacher Illustration]{\includegraphics[width=.235\textwidth]{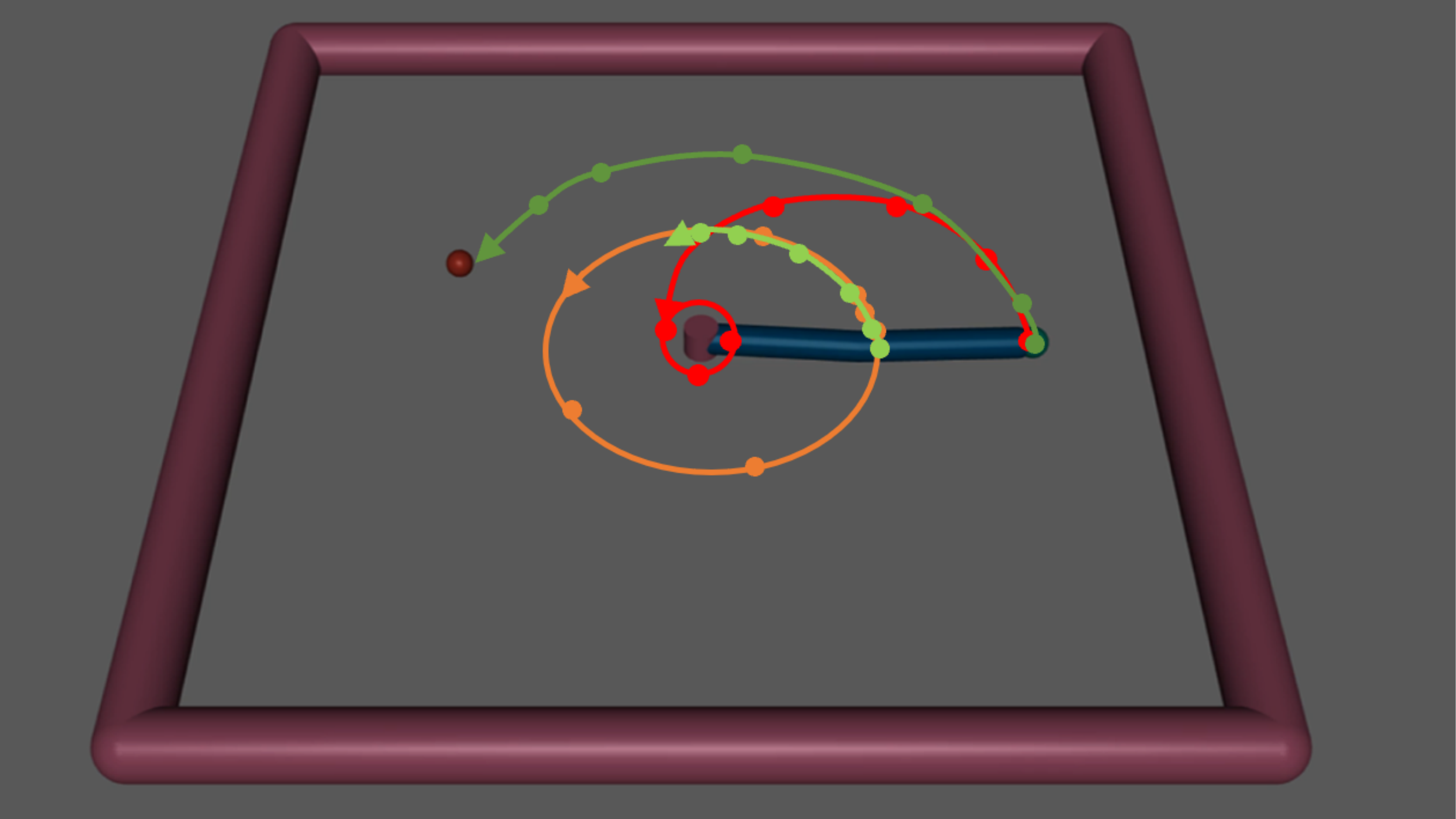}\label{fig:illustration_reacher}}
    \subfigure[Ant Illustration]{\includegraphics[width=.235\textwidth]{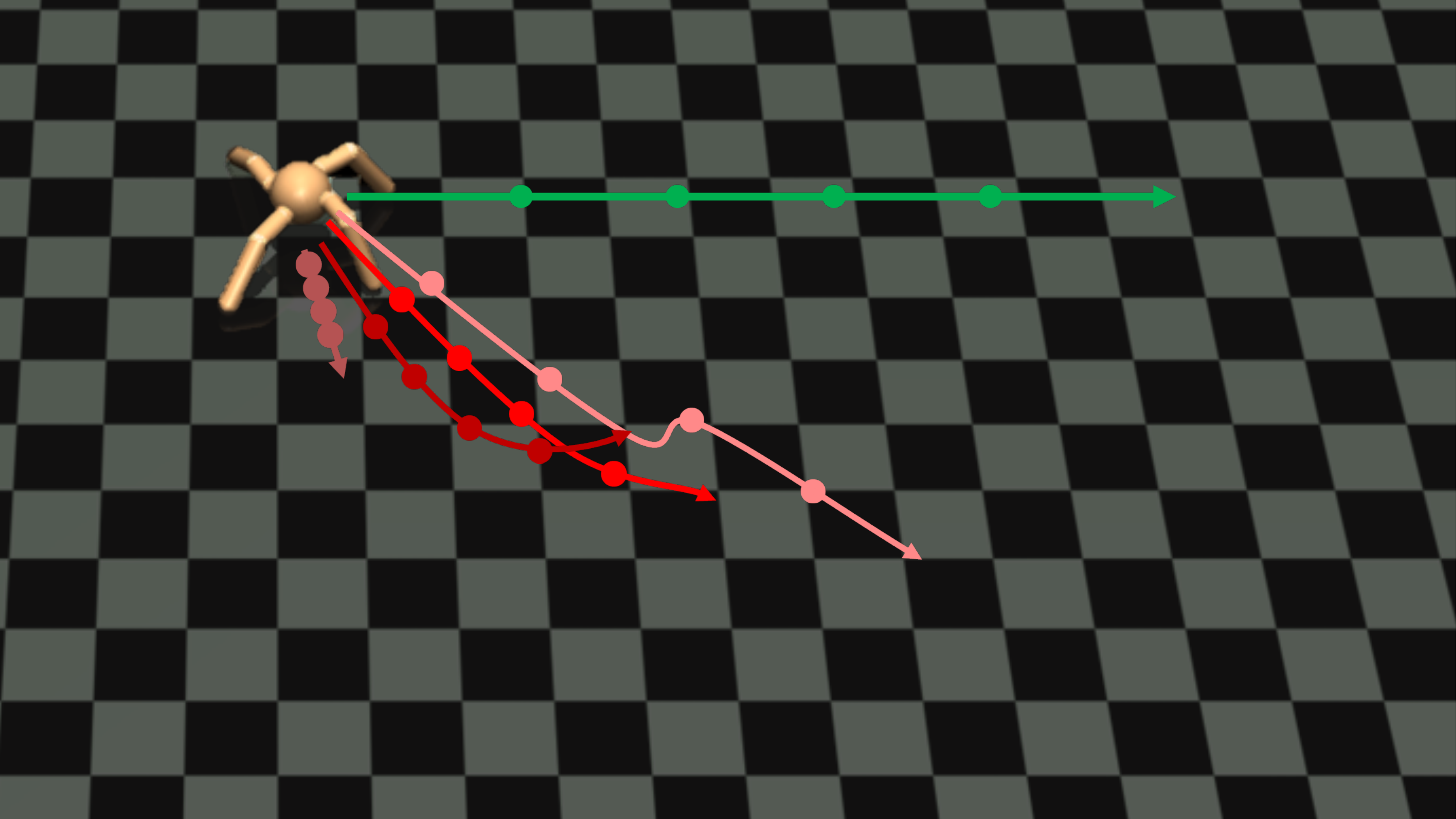}\label{fig:illustration_ant}}
    \subfigure[Simulation Illustration]{\includegraphics[width=.235\textwidth]{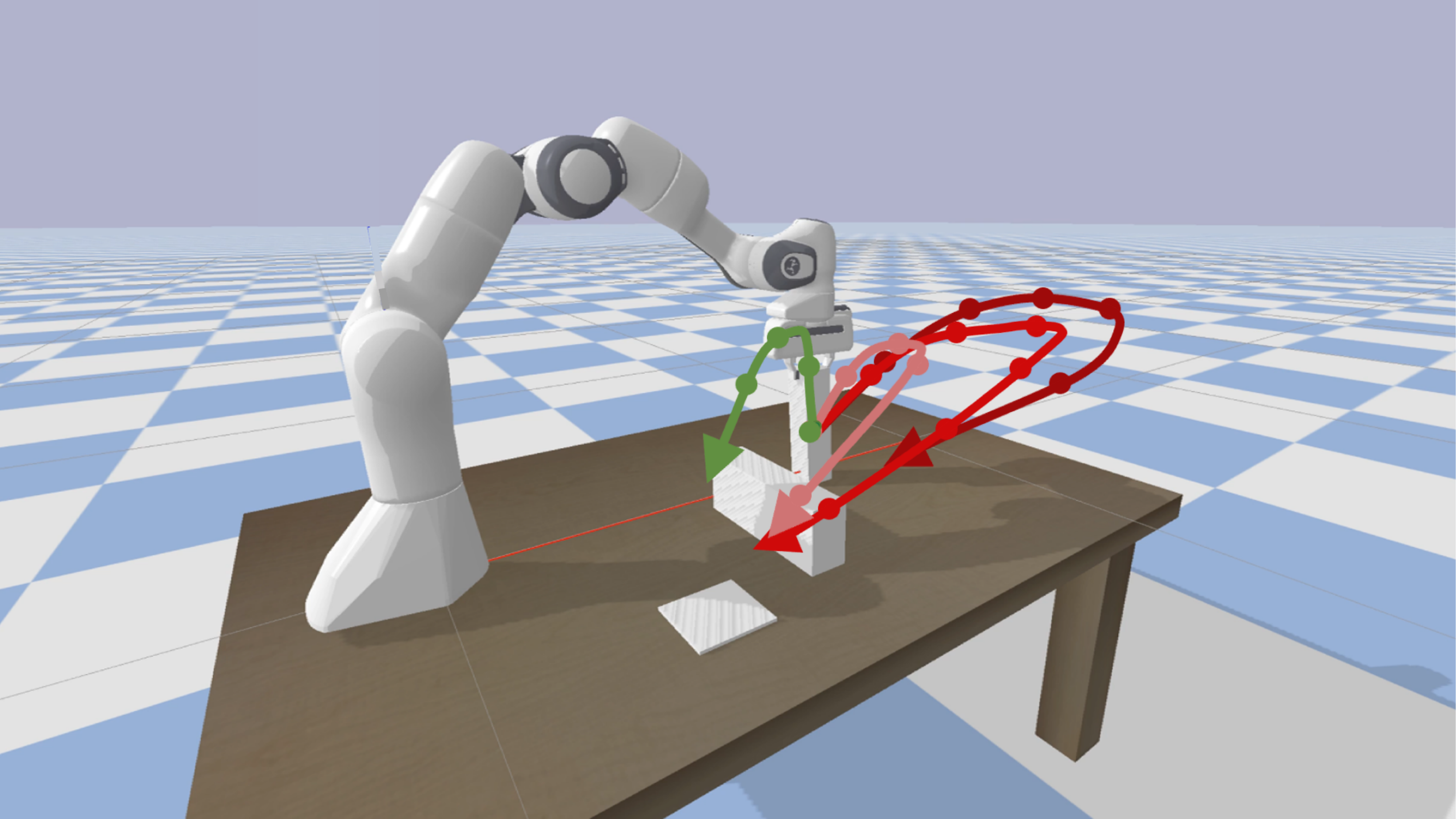}\label{fig:illustration_panda}}
    \subfigure[Real Robot Illustration]{\includegraphics[width=.235\textwidth]{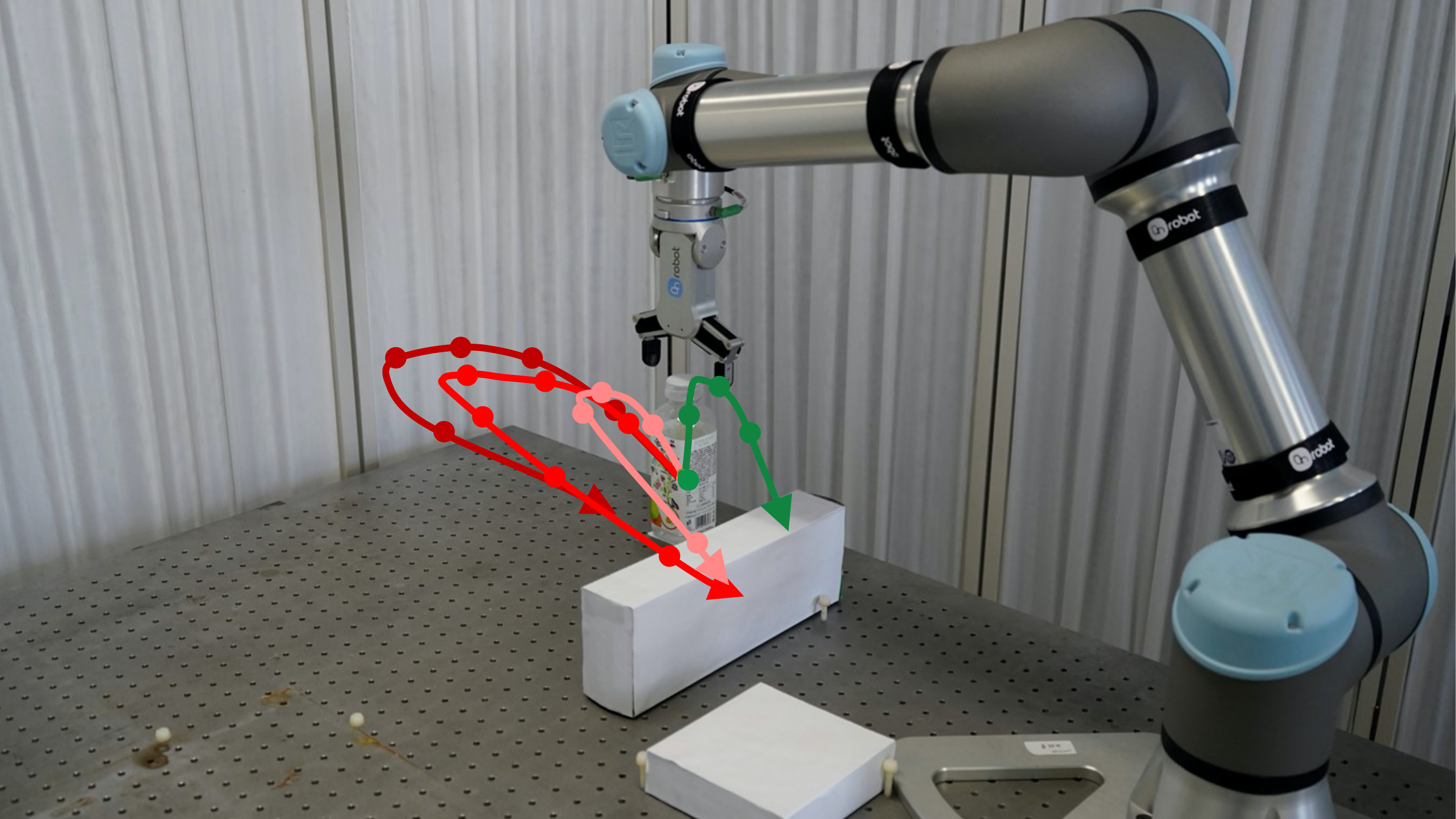}\label{fig:illustration_ur5e}}
    \subfigure[Reacher Results]{\includegraphics[width=.225\textwidth]{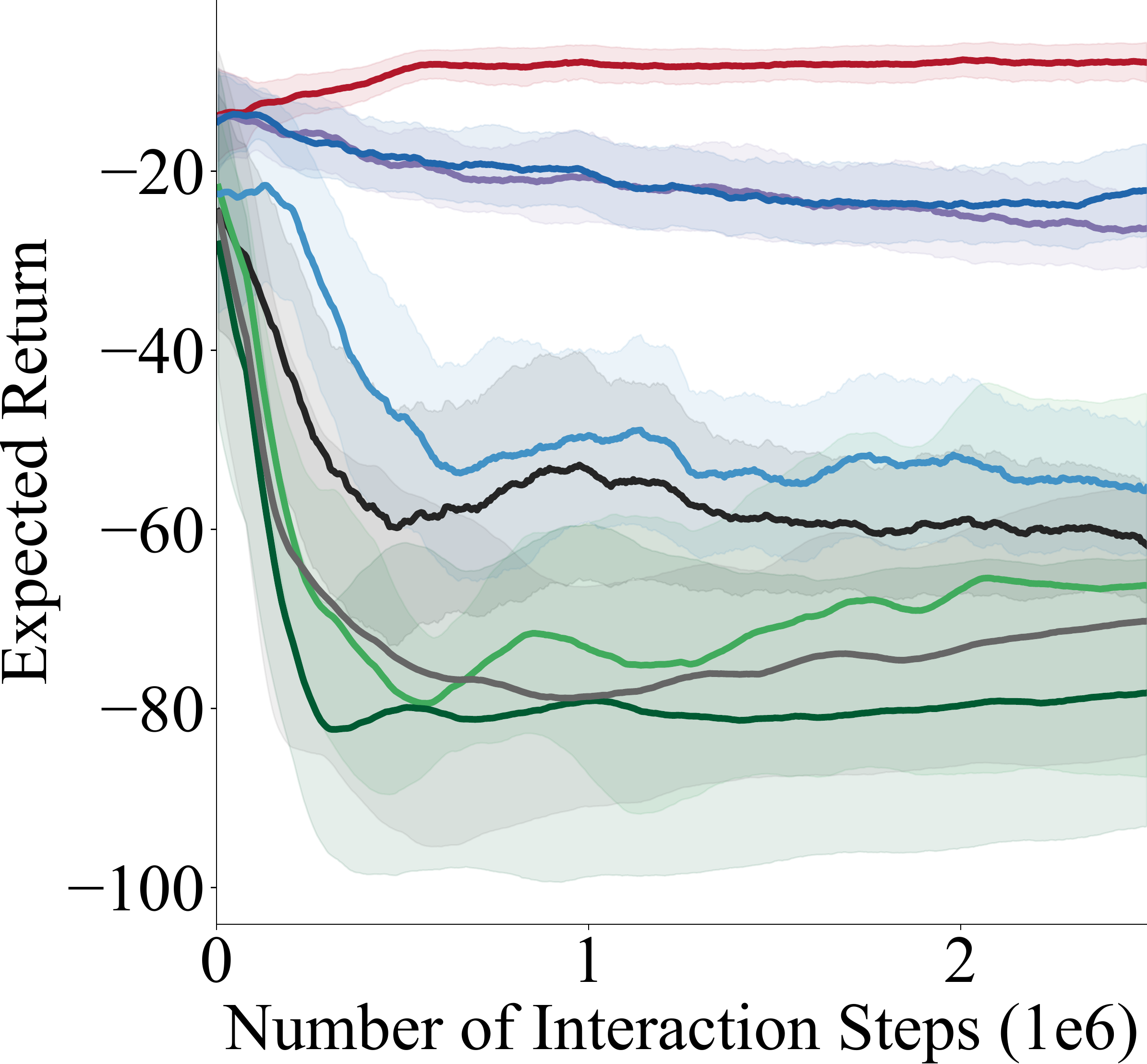}\label{fig:result_reacher}}
    \subfigure[Ant Results]{\includegraphics[width=.245\textwidth]{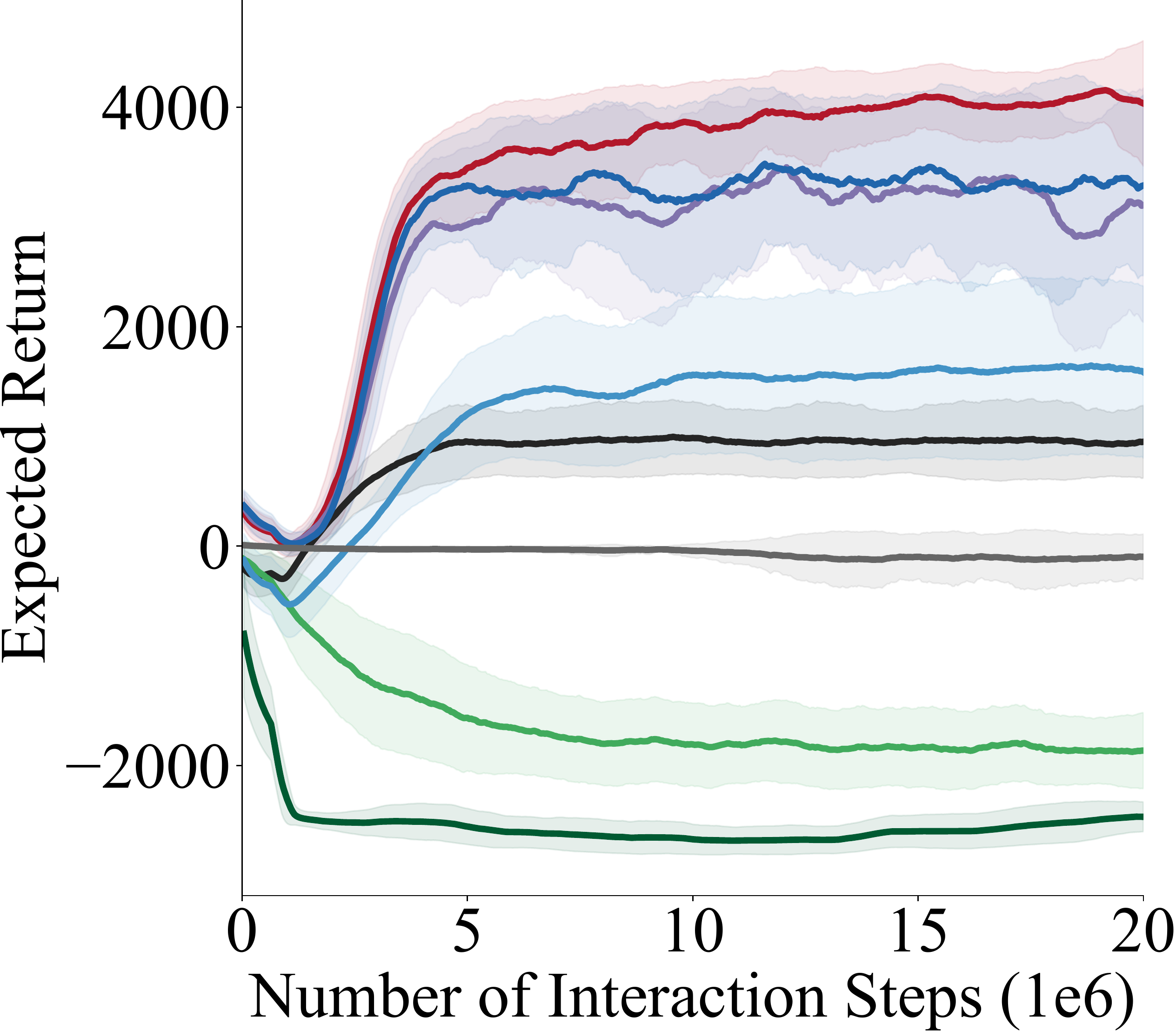}\label{fig:result_ant}}
    \subfigure[Simulation Results]{\includegraphics[width=.235\textwidth]{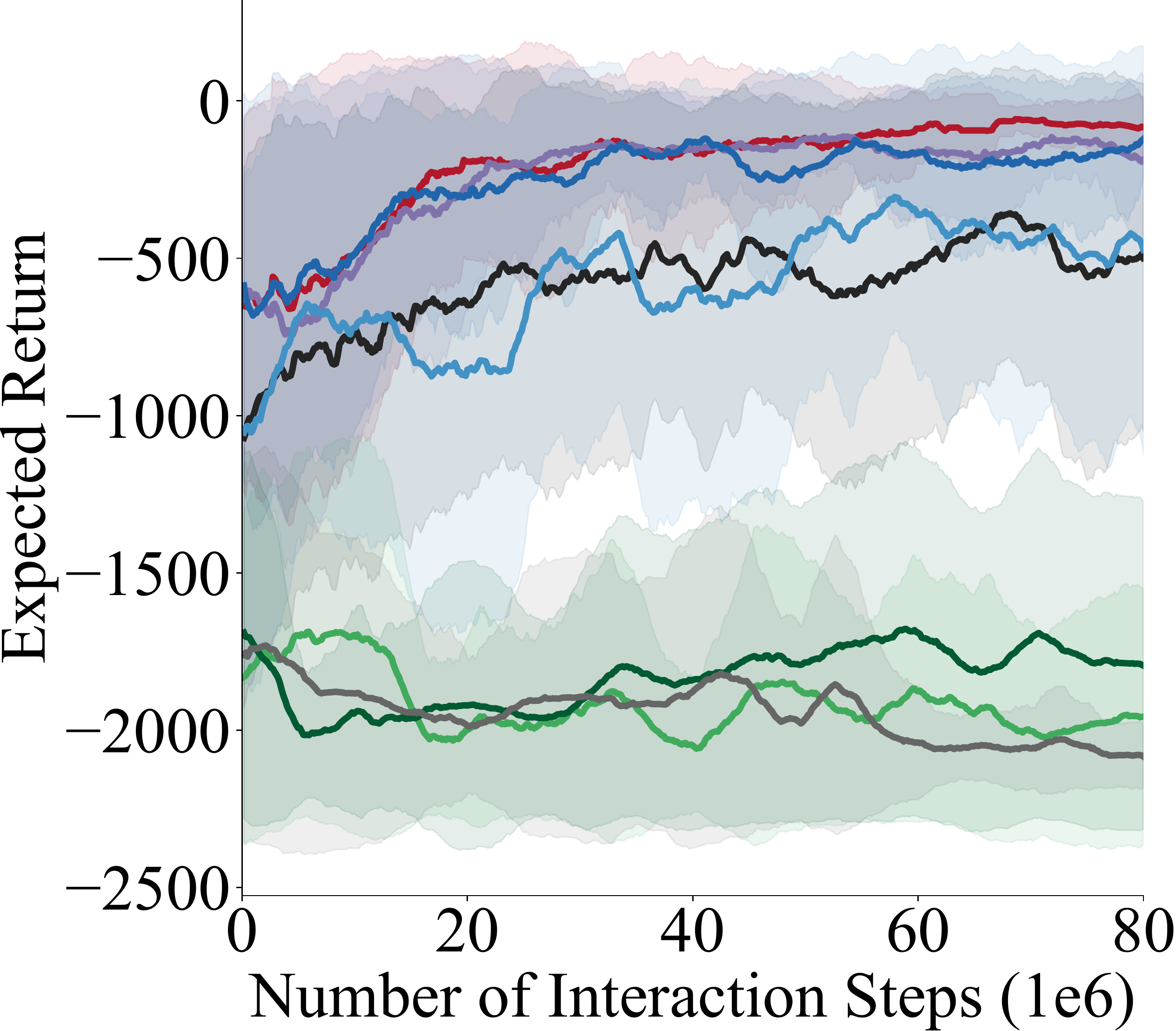}\label{fig:result_panda}}
    \subfigure[Real Robot Results]{\includegraphics[width=.235\textwidth]{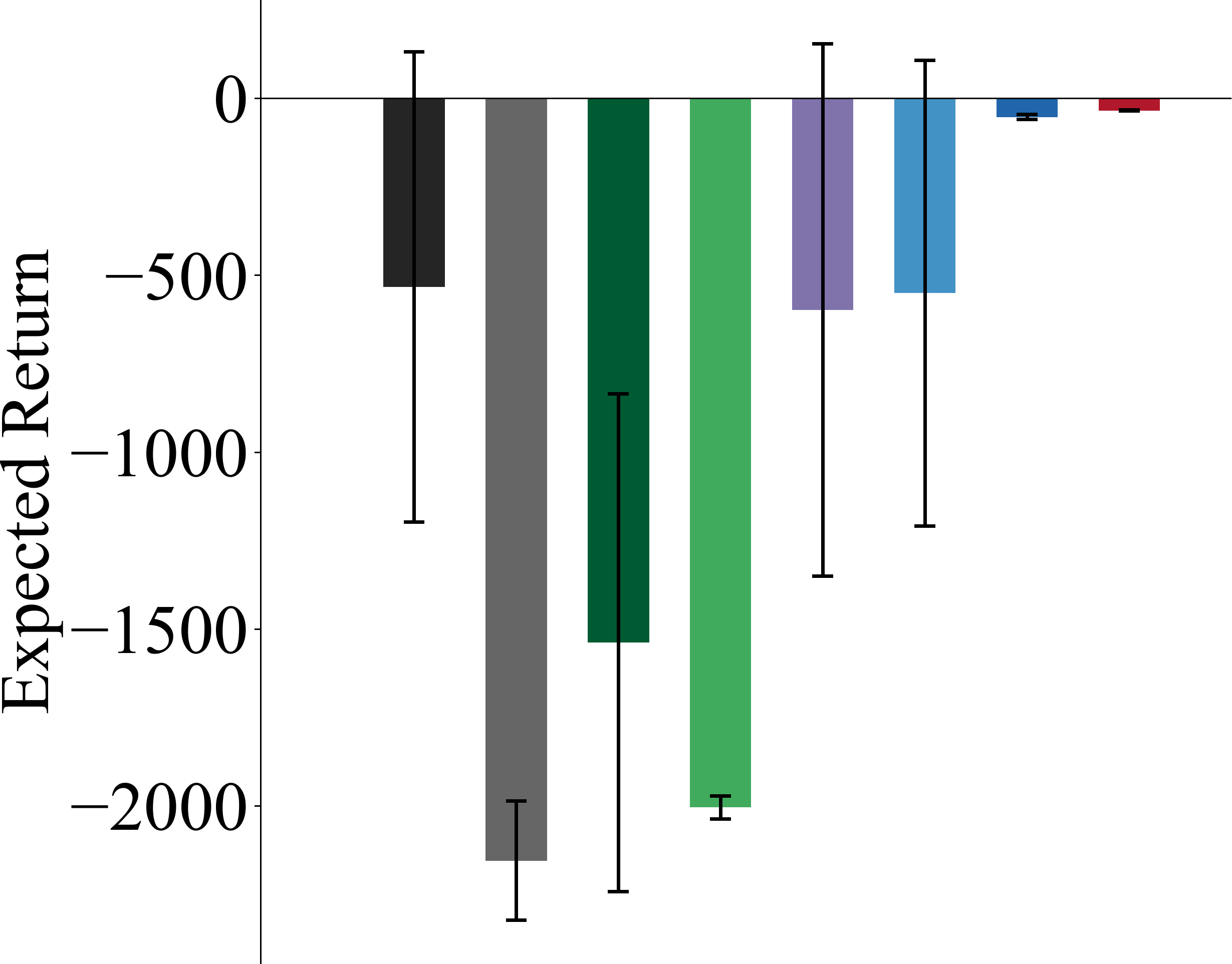}\label{fig:result_ur5e}}
    \subfigure{\includegraphics[width=.9\textwidth]{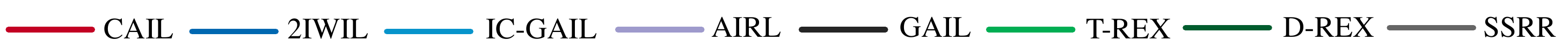}}
    \vspace{-10pt}
    \caption{(a) Reacher, (b) Ant, (c) Simulated Panda Robot Arm, (d) Real UR5e Robot Arm. In (a-d), the green trajectories indicate the optimal demonstrations, while the red and orange trajectories indicate demonstrations with varying optimality. (e-g) The expected return with respect to the number of interaction steps. (h) The expected return of the converged policies for UR5e Robot Arm.}
    \label{fig:mujoco_exp}
\end{figure}

\noindent \textbf{Robot Arm.} We further conduct experiments in more realistic environments: a simulated Franka Panda Arm and a real UR5e robot arm. As shown in Figure~\ref{fig:illustration_panda} and~\ref{fig:illustration_ur5e}, we design a task to let the robot arm pick up a bottle, avoid the obstacle, and put the bottle on a target platform. In the optimal demonstrations in green, the arm takes the shortest path to avoid the obstacle, and puts the bottle on the target, while in suboptimal ones in red (where, similar to before, the brightness of the trajectories indicates their optimality), the arm detours, does not reach the target, and even at times collides with the obstacle. The suboptimal demonstrations represent a wide range of optimality from near-optimal ones (small detour) to adversarial ones (colliding).
We vary the initial position of the robot end-effector and the goal position within an initial area and goal area respectively. 
For both simulated and real robot environments, we collect trajectories with $200,\!000$ interaction steps in total.

As shown in Figure~\ref{fig:result_panda} and~\ref{fig:result_ur5e}, \algo\ outperforms other methods in expected return in both the simulated and real robot environments. For the simulated robot arm environment, the p-value between CAIL and the closest baseline, 2IWIL, is $0.0974$. For the real robot environment, the p-value between CAIL and the closest baseline, AIRL, is $0.0209$ (statistically significant). In particular, in the real robot environment, \algo\ achieves a low standard deviation while other methods especially AIRL, IC-GAIL, D-REX and GAIL suffer from an unstable performance. The results demonstrate that \algo\ can work stably in the real robot environment. We report the success rate---rate that the robot successfully reaches the target within the time limit without colliding with the obstacle---and videos of sample policy rollouts in the supplementary materials.

\begin{figure}
    \centering
    \subfigure[Varying Optimality]{\includegraphics[width=.27\textwidth]{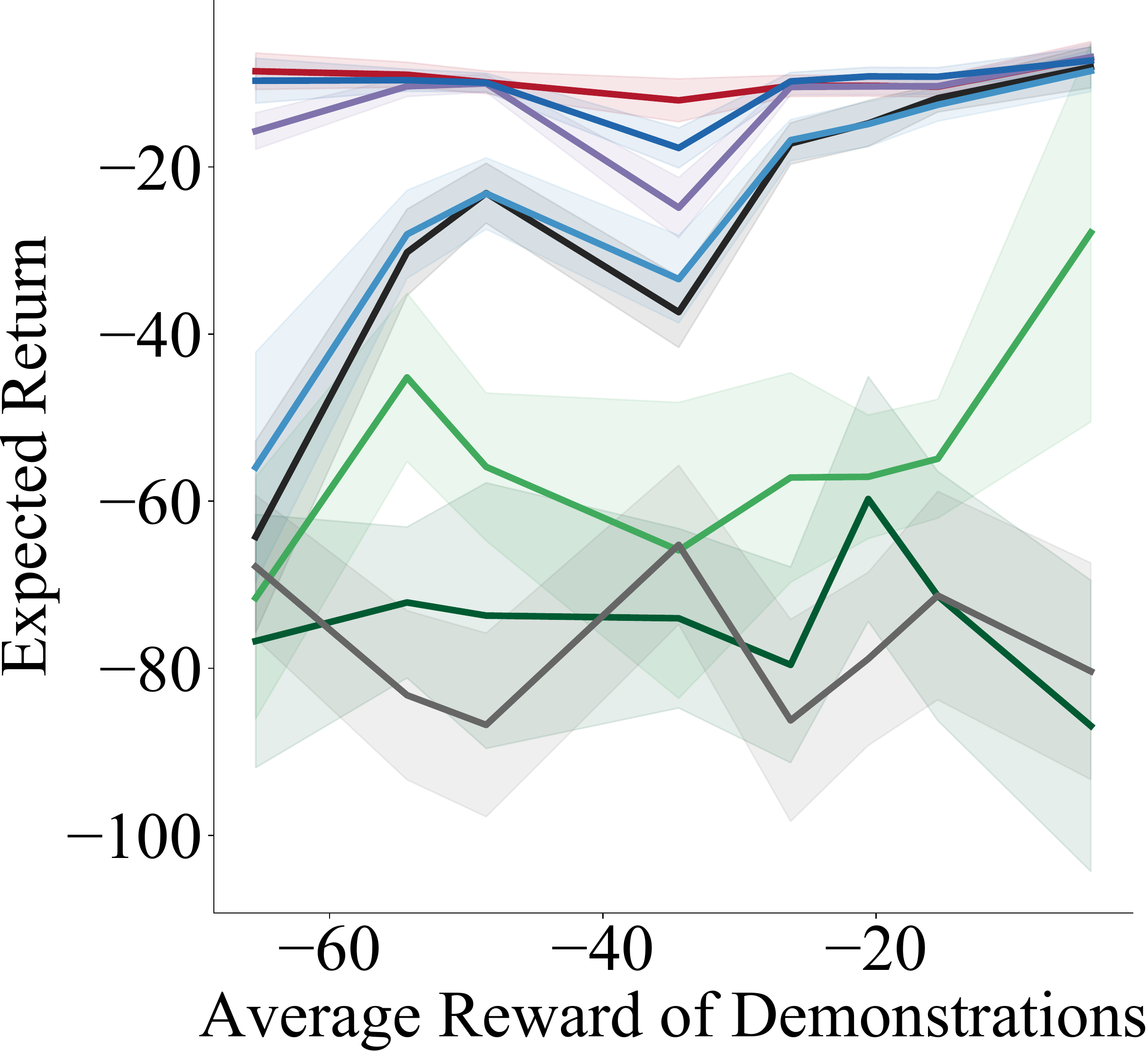}\label{fig:varying_perfect}}
    \subfigure[Varying Ratio]{\includegraphics[width=.27\textwidth]{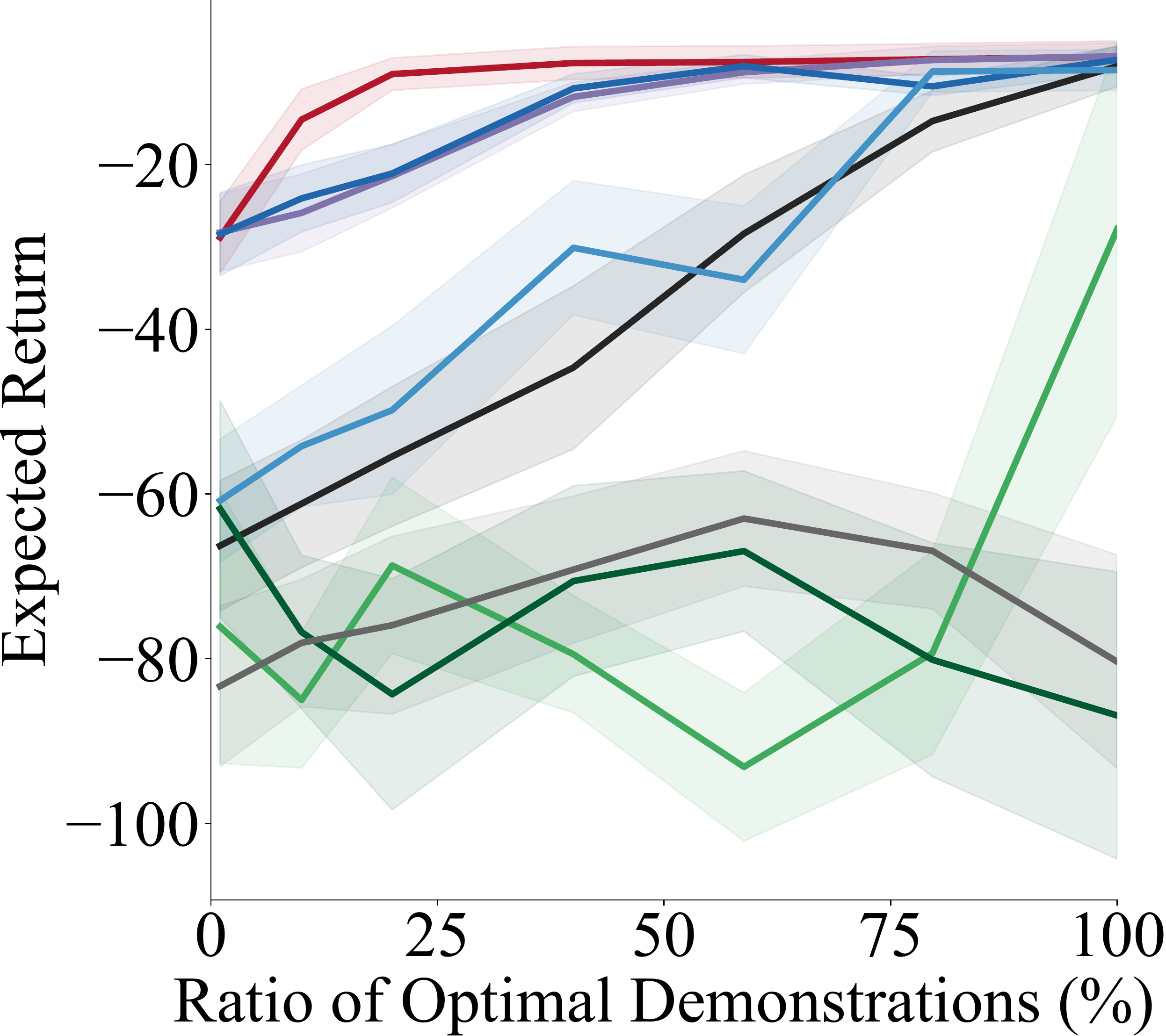}\label{fig:varying_ratio}}
    \subfigure[Pure Suboptimal]{\includegraphics[width=.27\textwidth]{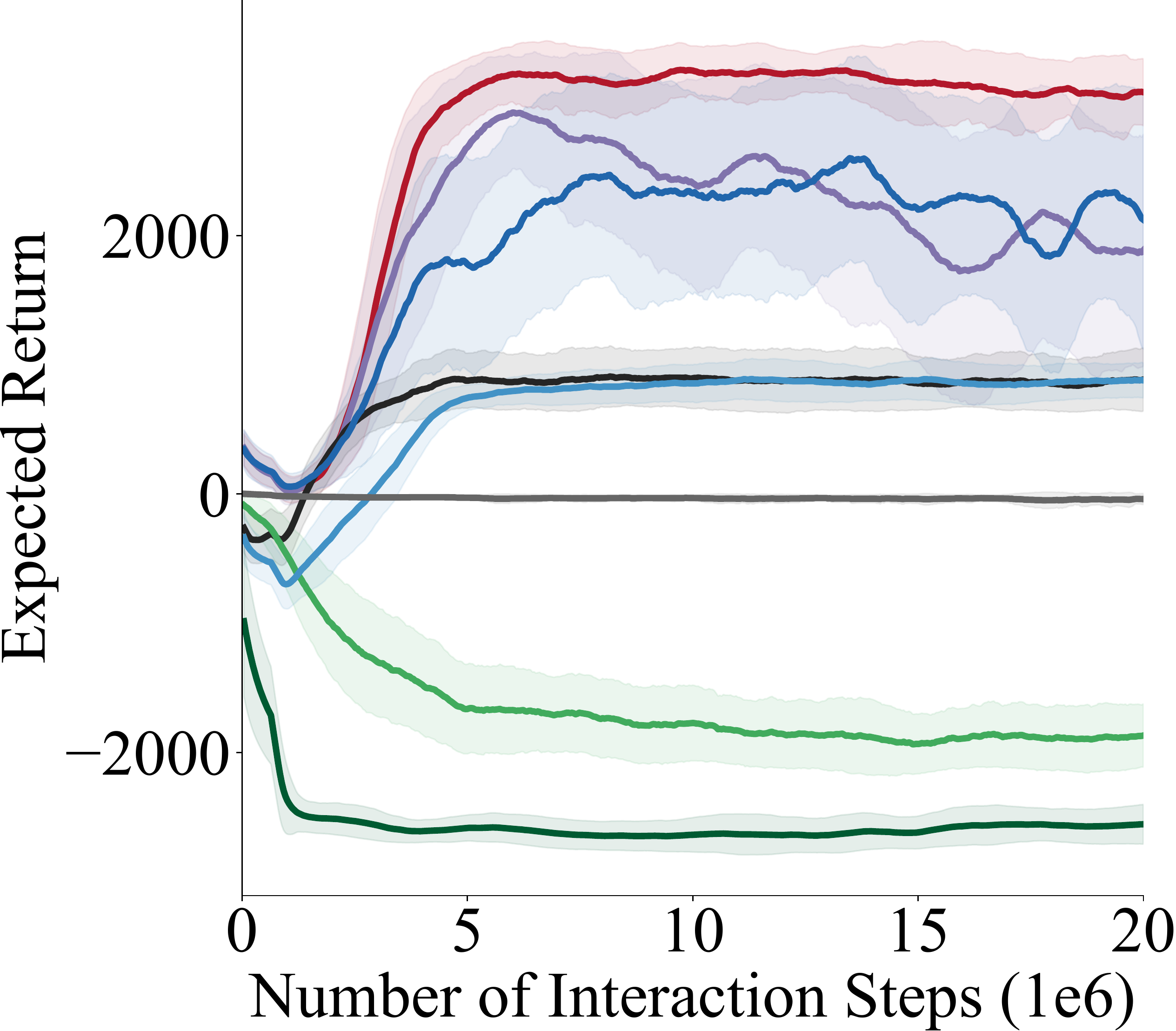}\label{fig:pure_suboptimal}}
    \subfigure{\includegraphics[width=.16\textwidth]{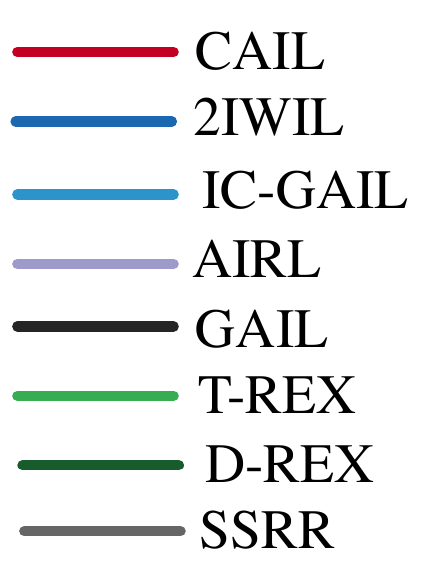}}
    \vspace{-10pt}
    \caption{(a-b) The Expected Return with respect to different optimality of demonstrations in the Reacher environment, where the different optimality are created by varying the optimality of non-optimal demonstrations, and varying the ratio of optimal demonstrations. (c) Results for learning from only non-optimal demonstrations in the Ant environment.}
    \label{fig:ablation}
    \vspace{-15pt}
\end{figure}

\noindent \textbf{Demonstrations with Different Optimality.}
We show the performance of different methods with demonstrations at different levels of optimality in the Reacher environment. We fix $20\%$ of the total demonstrations to be optimal and make the remaining $80\%$ demonstration drawn from the same suboptimal policy. We vary the optimality of this policy to investigate the performance change with respect to different optimalities. Another way to obtain different optimality is to vary the ratio of optimal demonstrations. We show the results of both varying optimality in Figure~\ref{fig:varying_perfect} and~\ref{fig:varying_ratio} respectively. We observe that CAIL consistently outperforms or performs comparably to other methods with demonstrations at different optimality. Also, CAIL performs more stably while the baselines suffer from a performance drop at specific optimality levels.

\noindent \textbf{Learning from Only Non-optimal Demonstrations.}
We verify that \algo\ can also learn from solely non-optimal demonstrations without relying on any optimal demonstrations. We remove the optimal demonstrations in the Ant environment and use the remaining demonstrations to conduct imitation learning. As shown in Figure~\ref{fig:pure_suboptimal}, \algo\ still achieves the best performance among all the methods, which demonstrates that even with all demonstrations being non-optimal, CAIL still can learn useful knowledge from those demonstrations with higher expected return and induce a better policy. The highest p-value between CAIL and the closest baseline (2IWIL) is $0.0067$, which indicates the performance gain is statistically significant. We observe that the performance of AIRL first increases and then decreases. This is because even though the demonstrations are suboptimal, there are potentially optimal state-action transitions leading to the initial high performance. However, at this early training stage, the AIRL model does not converge yet and the model parameters can still change rapidly. After training a sufficient number of steps, the AIRL model observes both useful and less useful transitions and converges to the average return of all the demonstrations.

\vspace{-8pt}
\section{Conclusion}\label{sec:conclusion}
\vspace{-8pt}
\noindent \textbf{Summary.} We propose a general learning framework, Confidence-Aware Imitation Learning, for imitation learning from demonstrations with varying optimality.
We adopt standard imitation learning algorithms with their corresponding imitation loss (inner loss), and leverage an outer loss to evaluate the quality of the imitation learning model. 
We simultaneously learn a confidence score over the demonstrations using the outer loss and learn the policy through optimizing the inner loss over the confidence-reweighted distribution of demonstrations. 
Our framework is applicable to any imitation learning model with compatible choices of inner and outer losses. 
We provide theoretical guarantees on the convergence of \algo\ and show that the learned policy outperforms baselines on various simulated and real-world environments under demonstrations with varying optimality.

\noindent \textbf{Limitations and Future Work.} 
Although we propose a flexible framework to address the problem of imitation learning from demonstrations with varying optimality, our work is limited in a few ways:
To learn a well-performing policy, we still require that the dataset consists of demonstrations that encode useful knowledge for policy learning.
We also require that the demonstrations and the imitation agent have the same dynamics.
In the future, we plan to learn from demonstrations with more failures and relax the assumptions of the demonstrator and imitator having the same dynamics.

\section{Acknowledgement}
We would like to thank Tong Xiao for inspiring discussions about the backbone imitation learning algorithms and her help on the experiments. This work is funded by Tsinghua GuoQiang Research Institute, FLI grant RFP2-000, NSF Awards 1941722 and 1849952, and DARPA HiCoN project.

\bibliographystyle{plain}
\bibliography{main}


\newpage
\appendix

\section{Proofs}
In this section, we provide the proofs of the theorems in Section 4.2 of the main text.
\subsection{Preliminaries}

\begin{definition}
(Lipschitz-smooth) Function $f(x): \mathbb{R}^d\rightarrow\mathbb{R}$ is Lipschitz-smooth with constant $L$ if 
\begin{equation}
    ||\nabla f(x)-\nabla f(y)||\leq L||x-y||, \quad \forall x,y\in \mathbb{R}^d
\end{equation}
\end{definition}

\begin{lemma}\label{lem:lipschitz-cauchy}
If function $f(x)$ is Lipschitz-smooth with constant $L$, then the following inequality holds:
\begin{equation}
    \left(\nabla f(x)-\nabla f(y)\right)^T(x-y)\leq L||x-y||^2
\end{equation}
\end{lemma}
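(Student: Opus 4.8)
The plan is to derive this directly from the Cauchy--Schwarz inequality together with the definition of Lipschitz-smoothness; no additional machinery is needed. First I would bound the left-hand side using Cauchy--Schwarz,
\[
\left(\nabla f(x)-\nabla f(y)\right)^T(x-y)\leq \|\nabla f(x)-\nabla f(y)\|\,\|x-y\|.
\]
Then I would invoke the hypothesis that $f$ is Lipschitz-smooth with constant $L$, i.e. $\|\nabla f(x)-\nabla f(y)\|\leq L\|x-y\|$, and substitute this into the previous display to get
\[
\left(\nabla f(x)-\nabla f(y)\right)^T(x-y)\leq L\|x-y\|\cdot\|x-y\| = L\|x-y\|^2,
\]
which is exactly the claimed inequality, valid for all $x,y\in\mathbb{R}^d$.

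I do not anticipate any real obstacle: the statement is the one-sided, assumption-free companion of the co-coercivity inequality, and in particular does not require convexity of $f$. The only point worth care is the direction of the inequality — it is an upper bound consistent with Cauchy--Schwarz, and one should resist the temptation to sharpen the right-hand side to $\tfrac{1}{L}\|\nabla f(x)-\nabla f(y)\|^2$, which would need extra structure that is not assumed here. If a self-contained alternative were desired, one could instead write $f$'s gradient difference via the fundamental theorem of calculus and bound the resulting integral, but that route is strictly more work and unnecessary.

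This lemma will then be used in the convergence analysis: applying it with $x=\theta_{\tau+1}$, $y=\theta_\tau$ (or to the map $g(\theta,\beta)$ appearing in Theorem~\ref{thm:rate}) lets us control the first-order change of $\mathcal{L}_\text{out}$ across a gradient step in terms of $\|x-y\|^2$, which combines with the gradient-alignment condition in Eqn.~\eqref{eq:grad_satisfy} to yield the monotone decrease in Theorem~\ref{thm:convergence} and, after telescoping, the $O(1/\sqrt{T})$ rate in Theorem~\ref{thm:rate}.
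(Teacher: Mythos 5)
Your proof is correct and follows exactly the paper's own argument: bound the inner product by Cauchy--Schwarz and then apply the Lipschitz-smoothness bound $\|\nabla f(x)-\nabla f(y)\|\leq L\|x-y\|$. Nothing further is needed.
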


\begin{proof}
The proof is straight forward that
\begin{equation}
    \begin{aligned}
    &\left(\nabla f(x)-\nabla f(y)\right)^T(x-y)\\
    &\leq ||\nabla f(x)-\nabla f(y)||\cdot ||x-y||\\
    &\leq L||x-y||^2
    \end{aligned}
\end{equation}
The first equation follows from the Cauchy-Schwarz inequality, and the second inequality comes from the definition of Lipschitz-smooth. 
\end{proof}

\begin{lemma}\label{lem:lipschitz-inequality}
If function $f(x)$ is Lipschitz-smooth with constant $L$, then the following inequality holds: 
\begin{equation}
    f(y)\leq f(x)+\nabla f(x)^T(y-x)+\frac{L}{2}||y-x||^2, \quad\forall x,y
\end{equation}
\end{lemma}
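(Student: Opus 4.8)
The plan is to prove this standard descent lemma by reducing the multivariate inequality to a one-dimensional integral along the line segment joining $x$ and $y$, and then bounding the integrand using the Lipschitz-smoothness hypothesis. This route is self-contained, though as noted below Lemma~\ref{lem:lipschitz-cauchy} also supplies the required integrand estimate.

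First I would introduce the auxiliary scalar function $g(t)=f(x+t(y-x))$ for $t\in[0,1]$, so that $g(0)=f(x)$ and $g(1)=f(y)$. By the chain rule $g'(t)=\nabla f(x+t(y-x))^T(y-x)$, and the fundamental theorem of calculus gives $f(y)-f(x)=\int_0^1 \nabla f(x+t(y-x))^T(y-x)\,dt$.

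Next I would subtract the linear term. Writing $\nabla f(x)^T(y-x)=\int_0^1 \nabla f(x)^T(y-x)\,dt$, the difference becomes $f(y)-f(x)-\nabla f(x)^T(y-x)=\int_0^1 (\nabla f(x+t(y-x))-\nabla f(x))^T(y-x)\,dt$. The key step is to bound the integrand: Cauchy-Schwarz shows it is at most $||\nabla f(x+t(y-x))-\nabla f(x)||\cdot||y-x||$, and the definition of Lipschitz-smoothness applied to the points $x+t(y-x)$ and $x$ gives $||\nabla f(x+t(y-x))-\nabla f(x)||\leq L||t(y-x)||=Lt||y-x||$, so the integrand is at most $Lt||y-x||^2$. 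Equivalently, Lemma~\ref{lem:lipschitz-cauchy} applied with the same two points bounds $t$ times the integrand by $Lt^2||y-x||^2$, which yields the identical estimate after dividing by $t$.

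Finally I would integrate this bound over $t\in[0,1]$, using $\int_0^1 t\,dt=\tfrac12$, to obtain $f(y)-f(x)-\nabla f(x)^T(y-x)\leq\frac{L}{2}||y-x||^2$, which is exactly the claimed inequality after rearrangement. I do not anticipate a genuine obstacle since the result is classical; the only points requiring care are factoring the scalar $t$ out of $||t(y-x)||$ when invoking the Lipschitz bound, and matching the linear term term-by-term under the integral sign before applying Cauchy-Schwarz.
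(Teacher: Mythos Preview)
Your proposal is correct and follows essentially the same route as the paper: define the auxiliary scalar function $g(t)=f(x+t(y-x))$, apply the fundamental theorem of calculus, and bound the integrand $(\nabla f(x+t(y-x))-\nabla f(x))^T(y-x)$ by $Lt\|y-x\|^2$ via Cauchy--Schwarz plus the Lipschitz hypothesis (the paper invokes Lemma~\ref{lem:lipschitz-cauchy} for this step, which you also note is an equivalent option). The only cosmetic difference is that the paper phrases the bound as $g'(t)-g'(0)\leq tL\|y-x\|^2$ before integrating, whereas you subtract the linear term under the integral sign first; the computations are identical.
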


\begin{proof}
Define $g(t)=f(x+t(y-x))$. If $f(x)$ is Lipschitz-smooth with constant $L$, then from Lemma~\ref{lem:lipschitz-cauchy}, we have
\begin{equation}
    \begin{aligned}
    &g'(t)-g'(0)\\
    &=\left(\nabla f(x+t(y-x))-\nabla f(x)\right)^T(y-x)\\
    &=\frac{1}{t} \left(\nabla f(x+t(y-x))-\nabla f(x)\right)^T((x+t(y-x))-x)\\
    &\leq \frac{L}{t}||t(y-x)||^2=tL||y-x||^2
    \end{aligned}
\end{equation}
We then integrate this equation from $t=0$ to $t=1$:
\begin{equation}
    \begin{aligned}
    f(y)&=g(1)=g(0)+\int_0^1 g'(t)dt\\
    &\leq g(0)+\int_0^1g'(0)dt+\int_0^1tL||y-x||^2dt\\
    &=g(0)+g'(0)+\frac{L}{2}||y-x||^2\\
    &=f(x)+\nabla f(x)^T(y-x)+\frac{L}{2}||y-x||^2
    \end{aligned}
\end{equation}
\end{proof}

\subsection{Proof of Theoretical Results}

In this section, we provide the proofs of the theorems proposed in this paper. First, we provide the proof of Theorem 1 in the main text.

\setcounter{theorem}{0}
\begin{theorem}\label{thm:convergence-appendix}
 (Convergence) Suppose the outer loss $\mathcal{L}_\text{out}$ is Lipschitz-smooth with constant $L$, the inequality
    \begin{equation}\label{eq:grad_satisfy-appendix}
        \nabla_{\theta}\mathcal{L}_\text{out}(\theta_{\tau+1})^\top\nabla_{\theta}\mathcal{L}_\text{in}(\theta_{\tau},\beta_{\tau+1})\geq C||\nabla_{\theta}\mathcal{L}_\text{in}(\theta_{\tau},\beta_{\tau+1})||^2
    \end{equation} holds for a constant $C\geq 0$ in every step $\tau$\footnote{We remove $(s,a)$ in $\mathcal{L}_\text{in}$ for notation simplicity.}, and the learning rate satisfies $\mu\leq\frac{2C}{L}$, then the outer loss decreases along with each iteration: $\mathcal{L}_\text{out}(\theta_{\tau+1})\leq\mathcal{L}_\text{out}(\theta_{\tau})$, and the equality holds if $\nabla_{\beta}\mathcal{L}_\text{out}(\theta_{\tau})=0$ or $\theta_{\tau+1} =\theta_{\tau}$.
\end{theorem}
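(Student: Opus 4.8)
Inspecting the one-step scheme in Eqn.~\eqref{eq:update_beta_one_step}, the pseudo-update does not touch $\theta$ and $\beta_{\tau+1}$ is already fixed, so the only move affecting the outer loss in one iteration is the inner step $\theta_{\tau+1}=\theta_\tau-\mu\nabla_\theta\mathcal{L}_\text{in}(\theta_\tau,\beta_{\tau+1})$. Hence the statement reduces to: one gradient step on $\mathcal{L}_\text{in}$ cannot increase $\mathcal{L}_\text{out}$, as long as the step direction is well aligned with $-\nabla_\theta\mathcal{L}_\text{out}$ in the sense of~\eqref{eq:grad_satisfy-appendix}. I will abbreviate $h=\nabla_\theta\mathcal{L}_\text{in}(\theta_\tau,\beta_{\tau+1})$ and $\Delta=\theta_{\tau+1}-\theta_\tau=-\mu h$.

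First I would write the increment of $\mathcal{L}_\text{out}$ exactly along the segment from $\theta_\tau$ to $\theta_{\tau+1}$, as in the proof of Lemma~\ref{lem:lipschitz-inequality}:
\[
\mathcal{L}_\text{out}(\theta_{\tau+1})-\mathcal{L}_\text{out}(\theta_\tau)=\int_0^1 \nabla_\theta\mathcal{L}_\text{out}(\theta_\tau+t\Delta)^\top\Delta\,dt .
\]
The key move is to peel off the gradient at the \emph{endpoint} $\theta_{\tau+1}$ rather than at $\theta_\tau$, splitting the integrand into $\nabla_\theta\mathcal{L}_\text{out}(\theta_{\tau+1})^\top\Delta$ plus a remainder $\big(\nabla_\theta\mathcal{L}_\text{out}(\theta_\tau+t\Delta)-\nabla_\theta\mathcal{L}_\text{out}(\theta_{\tau+1})\big)^\top\Delta$. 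The first piece is constant in $t$; substituting $\Delta=-\mu h$ and applying the hypothesis~\eqref{eq:grad_satisfy-appendix} bounds it above by $-\mu C||h||^2$. For the remainder I would use Cauchy--Schwarz and the Lipschitz-smoothness of $\mathcal{L}_\text{out}$, together with $||(\theta_\tau+t\Delta)-\theta_{\tau+1}||=(1-t)||\Delta||$, to bound each integrand by $L(1-t)||\Delta||^2$; integrating in $t$ produces the factor $\tfrac12$, so the remainder is at most $\tfrac12 L||\Delta||^2=\tfrac12 L\mu^2||h||^2$.

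Adding the two bounds gives $\mathcal{L}_\text{out}(\theta_{\tau+1})-\mathcal{L}_\text{out}(\theta_\tau)\le\mu\big(\tfrac{L\mu}{2}-C\big)||h||^2$, and the step-size condition $\mu\le\tfrac{2C}{L}$ makes the coefficient $\tfrac{L\mu}{2}-C\le 0$, yielding $\mathcal{L}_\text{out}(\theta_{\tau+1})\le\mathcal{L}_\text{out}(\theta_\tau)$. For the equality claim I would read it off from the same bound: $\theta_{\tau+1}=\theta_\tau$ makes both sides equal trivially, and if $\nabla_\beta\mathcal{L}_\text{out}(\theta_\tau)=0$ the $\beta$-update is inactive, so the inner optimization leaves $\theta$ unchanged and $\mathcal{L}_\text{out}$ does not move.

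The delicate point --- and the reason the raw descent inequality of Lemma~\ref{lem:lipschitz-inequality} is not quite enough --- is the mismatch between the point at which~\eqref{eq:grad_satisfy-appendix} controls $\nabla_\theta\mathcal{L}_\text{out}$ (namely $\theta_{\tau+1}$) and the point of a naive Taylor expansion ($\theta_\tau$): closing that gap via Lipschitz-smoothness would cost an extra $L\mu^2||h||^2$ and only deliver the weaker threshold $\mu\le\tfrac{2C}{3L}$. Expanding along the segment and discarding the endpoint gradient keeps the smoothness penalty at $\tfrac12 L\mu^2||h||^2$, which is exactly what is needed for the stated condition $\mu\le\tfrac{2C}{L}$.
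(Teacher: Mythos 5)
Your main descent argument is correct, and it takes a genuinely different---and more careful---route than the paper's. The paper applies the standard descent lemma (Lemma~\ref{lem:lipschitz-inequality}) anchored at $\theta_\tau$, obtaining the term $\nabla_\theta\mathcal{L}_\text{out}(\theta_{\tau})^\top(\theta_{\tau+1}-\theta_{\tau})$, and then rewrites it as $-\mu\nabla_\theta\mathcal{L}_\text{out}(\theta_{\tau+1})^\top\nabla_\theta\mathcal{L}_\text{in}(\theta_{\tau},\beta_{\tau+1})$, presenting this as an equality so that hypothesis~\eqref{eq:grad_satisfy-appendix} (which is stated at $\theta_{\tau+1}$) can be invoked. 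That substitution of $\nabla_\theta\mathcal{L}_\text{out}(\theta_{\tau+1})$ for $\nabla_\theta\mathcal{L}_\text{out}(\theta_{\tau})$ is not an equality, and patching it through Lipschitz-smoothness costs an extra $L\mu^2\|\nabla_\theta\mathcal{L}_\text{in}\|^2$, which---exactly as you note---only delivers the weaker threshold $\mu\leq\frac{2C}{3L}$. Your endpoint-anchored expansion, i.e.\ the inequality $f(y)\leq f(x)+\nabla f(y)^\top(y-x)+\frac{L}{2}\|y-x\|^2$ obtained by peeling off the gradient at $y$ inside the integral (the same integral argument as Lemma~\ref{lem:lipschitz-inequality}, with the remainder bounded by $L(1-t)\|\Delta\|^2$), applies the hypothesis at the point where it actually holds and recovers the stated condition $\mu\leq\frac{2C}{L}$ with the same final bound $\mu\bigl(\frac{L\mu}{2}-C\bigr)\|\nabla_\theta\mathcal{L}_\text{in}(\theta_\tau,\beta_{\tau+1})\|^2\leq 0$. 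In short: the paper's proof has a gap at precisely the step you identified, and your decomposition closes it without weakening the theorem.

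The one weak spot in your write-up is the equality discussion. The claim that $\nabla_\beta\mathcal{L}_\text{out}(\theta_\tau)=0$ makes ``the inner optimization leave $\theta$ unchanged'' does not follow: a vanishing $\beta$-gradient only freezes $\beta$ ($\beta_{\tau+1}=\beta_\tau$), while $\theta_{\tau+1}=\theta_\tau-\mu\nabla_\theta\mathcal{L}_\text{in}(\theta_\tau,\beta_{\tau+1})$ can still move $\theta$ and hence change $\mathcal{L}_\text{out}$. (The paper's own treatment of this case is similarly muddled---it silently replaces $\nabla_\beta$ with $\nabla_\theta$ before concluding.) This is a side claim about a sufficient condition for equality and does not affect the monotonicity result, but if you want it airtight you would need either an additional assumption tying $\nabla_\beta\mathcal{L}_\text{out}=0$ to $\nabla_\theta\mathcal{L}_\text{in}=0$, or to restrict the equality statement to the case $\theta_{\tau+1}=\theta_\tau$.
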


\begin{proof}
Since $\mathcal{L}_\text{out}$ is Lipschitz-smooth, following Lemma~\ref{lem:lipschitz-inequality}, we have
\begin{equation}
    \begin{aligned}
    &\mathcal{L}_\text{out}(\theta_{\tau+1})-\mathcal{L}_\text{out}(\theta_{\tau})\\
    &\leq\nabla_\theta\mathcal{L}_\text{out}(\theta_{\tau})^T(\theta_{\tau+1}-\theta_{\tau})+\frac{L}{2}||(\theta_{\tau+1}-\theta_{\tau})||^2\\
    &=-\mu\nabla_\theta\mathcal{L}_\text{out}(\theta_{\tau+1})^T\nabla_\theta\mathcal{L}_\text{in}(\theta_{\tau},\beta_{\tau+1}) \\
    &+\frac{L}{2}\mu^2||\nabla_\theta\mathcal{L}_\text{in}(\theta_{\tau},\beta_{\tau+1})||^2\\
    &\leq-\left(\mu C-\frac{L}{2}\mu^2\right)||\nabla_\theta\mathcal{L}_\text{in}(\theta_{\tau},\beta_{\tau+1})||^2\\
    &\leq 0
    \end{aligned}
\end{equation}
The first inequality comes from Lemma~\ref{lem:lipschitz-inequality}, and the second inequality holds because we update $\theta_\tau$ to $\theta_{\tau+1}$ only when $\nabla_{\theta}\mathcal{L}_\text{out}(\theta_{\tau+1})^T\nabla_{\theta}\mathcal{L}_\text{in}(\theta_{\tau},\beta_{\tau+1})\geq C||\nabla_{\theta}\mathcal{L}_\text{in}(\theta_{\tau},\beta_{\tau+1})||^2$ holds, otherwise $\theta_{\tau+1}=\theta_\tau$ so $\mathcal{L}_\text{out}(\theta_{\tau+1})=\mathcal{L}_\text{out}(\theta_\tau)$. The third inequality holds because we choose the learning rate to satisfy $\mu\leq\frac{2C}{L}$.

Then if $\nabla_\theta\mathcal{L}_\text{out}(\theta_\tau)=0$, and if Eqn.~\eqref{eq:grad_satisfy-appendix} is satisfied, we have $\nabla_\theta\mathcal{L}_\text{in}(\theta_{\tau},\beta_{\tau+1})=0$. Following the updating rule of $\alpha$ in Eqn. (10), we can derive $\theta_{\tau+1}=\theta_\tau$, so $\mathcal{L}_\text{out}(\theta_{\tau+1})=\mathcal{L}_\text{out}(\theta_\tau)$. Besides, if Eqn. (11) is not satisfied, we also have $\theta_{\tau+1}=\theta_\tau$, and thus $\mathcal{L}_\text{out}(\theta_{\tau+1})=\mathcal{L}_\text{out}(\theta_\tau)$.
\end{proof}

We now provide the proof of Theorem 2 on convergence rate of the algorithm.

\begin{theorem}
    (Convergence Rate) Under the assumptions in Theorem~\ref{thm:convergence-appendix}, let
    \begin{equation}
    g(\theta, \beta)=\theta-\mu\nabla_\theta\mathcal{L}_\text{in}(s,a;\theta,\beta).
    \end{equation}
    We assume that $\mathcal{L}_\text{out}(g(\theta,\beta))$ is Lipschitz-smooth w.r.t. $\beta$ with constant $L_1$, $\mathcal{L}_\text{in}$ and $\mathcal{L}_\text{out}$ have $\sigma$-bounded gradients, and the norm of $\nabla_\beta \nabla_\theta \mathcal{L}_\text{in}(\theta;\beta)$ is bounded by $\sigma_1$. $L$ is the Lipschitz-smooth constant for $\mathcal{L}_\text{out}$ w.r.t. $g(\theta,\beta)$ as shown in Theorem~\ref{thm:convergence-appendix}. Consider the total training steps as $T$, we set $\alpha=\frac{C_1}{\sqrt{T}}$, for some constant $C_1$ where $0<C_1\le\frac{2}{L_1}$ and $\mu=\frac{C_2}{T}$ for some constant $C_2$. CAIL can achieve:
    \begin{equation}
        \min_{1\leq \tau\leq T}\mathbb{E}[||\nabla_\beta\mathcal{L}_\text{out}(\theta_\tau)||^2]\leq O\left(\frac{1}{\sqrt{T}}\right).
    \end{equation}
\end{theorem}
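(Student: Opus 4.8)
The plan is to track the one-step decrease of the outer loss as a function of $\beta$, treating $\mathcal{L}_\text{out}(\theta_{\tau+1}) = \mathcal{L}_\text{out}(g(\theta_\tau,\beta_{\tau+1}))$, and then telescope over $\tau = 1,\dots,T$. The key structural observation is that the update $\beta_{\tau+1} = \beta_\tau - \alpha \nabla_\beta \mathcal{L}_\text{out}(g(\theta_\tau,\beta_\tau))$ is (approximately) a gradient-descent step on the composite map $h_\tau(\beta) := \mathcal{L}_\text{out}(g(\theta_\tau,\beta))$, which we have assumed to be $L_1$-Lipschitz-smooth in $\beta$. So first I would apply Lemma~\ref{lem:lipschitz-inequality} to $h_\tau$ at the points $\beta_{\tau+1}$ and $\beta_\tau$, giving
\[
h_\tau(\beta_{\tau+1}) \le h_\tau(\beta_\tau) - \alpha\Big(1 - \tfrac{L_1\alpha}{2}\Big)\big\|\nabla_\beta \mathcal{L}_\text{out}(g(\theta_\tau,\beta_\tau))\big\|^2,
\]
and the choice $\alpha = C_1/\sqrt{T}$ with $C_1 \le 2/L_1$ makes the bracket at least $1/2$ for $T$ large, so this step alone contributes a term $-\tfrac{\alpha}{2}\|\nabla_\beta\mathcal{L}_\text{out}\|^2$ to the descent.

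The second ingredient handles the gap between $h_\tau(\beta_{\tau+1})$ — which uses the stale parameter $\theta_\tau$ inside $g$ — and the actual next iterate's outer loss $\mathcal{L}_\text{out}(\theta_{\tau+1})$, where $\theta_{\tau+1}$ is the true inner update $\theta_\tau - \mu\nabla_\theta\mathcal{L}_\text{in}(\theta_\tau,\beta_{\tau+1})$ using the \emph{new} $\beta$. Since $g(\theta_\tau,\beta_{\tau+1})$ differs from $\theta_{\tau+1}$ only through evaluating $\nabla_\theta\mathcal{L}_\text{in}$ at $\beta_\tau$ versus $\beta_{\tau+1}$, I would bound their difference using the $\sigma_1$-bound on $\nabla_\beta\nabla_\theta\mathcal{L}_\text{in}$: $\|g(\theta_\tau,\beta_{\tau+1}) - \theta_{\tau+1}\| \le \mu\sigma_1\|\beta_{\tau+1}-\beta_\tau\| = \mu\sigma_1\alpha\|\nabla_\beta\mathcal{L}_\text{out}\|$. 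Combined with the $\sigma$-bounded gradient of $\mathcal{L}_\text{out}$ (or its $L$-smoothness w.r.t. $g$), this produces an error term of order $\mu\sigma\sigma_1\alpha\|\nabla_\beta\mathcal{L}_\text{out}\|$ or $\mu\alpha$ times bounded quantities. Similarly I must account for the change in $\theta$ from $\theta_\tau$ to $\theta_{\tau+1}$ between iterations, i.e. relate $h_{\tau+1}(\beta_{\tau+1})$ back to $h_\tau(\beta_{\tau+1})$; because $\|\theta_{\tau+1}-\theta_\tau\| \le \mu\sigma$ (bounded inner gradient) and $\mathcal{L}_\text{out}$ is $L$-smooth, this costs at most $O(\mu\sigma^2 + L\mu^2\sigma^2)$ per step. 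With $\mu = C_2/T$, every one of these cross-terms is $O(1/T)$ or smaller, hence summable to $O(1)$ over $T$ steps.

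Putting it together: summing the per-step inequality
\[
\mathcal{L}_\text{out}(\theta_{\tau+1}) \le \mathcal{L}_\text{out}(\theta_\tau) - \tfrac{\alpha}{2}\|\nabla_\beta\mathcal{L}_\text{out}(\theta_\tau)\|^2 + O(\mu)
\]
over $\tau=1,\dots,T$, telescoping the left side to $\mathcal{L}_\text{out}(\theta_{T+1}) - \mathcal{L}_\text{out}(\theta_1) \ge -\mathcal{L}_\text{out}(\theta_1)$ (loss bounded below, say by $0$), and rearranging gives
\[
\tfrac{\alpha}{2}\sum_{\tau=1}^T \|\nabla_\beta\mathcal{L}_\text{out}(\theta_\tau)\|^2 \le \mathcal{L}_\text{out}(\theta_1) + T\cdot O(\mu) = O(1) + O(C_2).
\]
Dividing by $T\alpha/2 = C_1\sqrt{T}/2$ bounds the average, hence the minimum, of $\|\nabla_\beta\mathcal{L}_\text{out}(\theta_\tau)\|^2$ by $O(1/\sqrt{T})$. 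Taking expectations throughout (to accommodate the mini-batch stochasticity in the gradients, which I would model as unbiased with bounded second moment — subsumed under the $\sigma$-bound) yields the claimed $\min_{1\le\tau\le T}\mathbb{E}[\|\nabla_\beta\mathcal{L}_\text{out}(\theta_\tau)\|^2] \le O(1/\sqrt{T})$.

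The main obstacle I expect is bookkeeping the "stale $\theta$" discrepancies cleanly: the update of $\beta$ uses a pseudo-updated $\theta'_{\tau+1}$ built from $\beta_\tau$, but the quantity whose descent we want is $\mathcal{L}_\text{out}$ at the genuinely updated $\theta_{\tau+1}$ built from $\beta_{\tau+1}$, so one has to carefully chain three closely related parameter vectors ($\theta_\tau$, $g(\theta_\tau,\beta_{\tau+1})=\theta'$, and $\theta_{\tau+1}$) and repeatedly invoke the second-order bound $\sigma_1$ and the smoothness constants $L, L_1$ without letting any term escape the $O(1/\sqrt{T})$ budget. Getting the learning-rate coupling right — $\alpha\sim T^{-1/2}$ large enough to accumulate signal, $\mu\sim T^{-1}$ small enough that all the approximation errors from the single-step inner update vanish — is the delicate part; everything else is a routine application of the two smoothness lemmas already established.
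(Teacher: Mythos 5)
Your proposal is correct and follows essentially the same route as the paper: split the per-step change of $\mathcal{L}_\text{out}$ into a $\beta$-descent term controlled by the $L_1$-smoothness of $\mathcal{L}_\text{out}(g(\theta,\beta))$ in $\beta$, plus $\theta$-drift and gradient-staleness errors of order $\mu\sigma^2$, $L\mu^2\sigma^2$ and $\alpha\mu\sigma\sigma_1$, all rendered summable by $\mu=C_2/T$, then telescope and divide by $T\alpha$. The only slip is immaterial: since $\theta_{\tau+1}=g(\theta_\tau,\beta_{\tau+1})$ exactly under the one-step update rule, the ``gap'' between $h_\tau(\beta_{\tau+1})$ and $\mathcal{L}_\text{out}(\theta_{\tau+1})$ that you propose to bound is identically zero, and the genuine stale-parameter discrepancy is the one you also handle, namely relating $g(\theta_{\tau-1},\cdot)$ to $g(\theta_\tau,\cdot)$ across consecutive iterates.
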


\begin{proof}
According to the update rule of $\theta$, we have
\begin{equation}\label{eq:conf-loss-gap}
    \begin{aligned}
    &\mathcal{L}_\text{out}(\theta_{\tau+1})-\mathcal{L}_\text{out}(\theta_\tau)\\
    &=\mathcal{L}_\text{out}(\theta_\tau-\mu\nabla_\theta\mathcal{L}_\text{in}(\theta_\tau,\beta_{\tau+1}))\\
    &\quad-\mathcal{L}_\text{out}(\theta_{\tau-1}-\mu\nabla_\theta\mathcal{L}_\text{in}(\theta_{\tau-1},\beta_{\tau}))\\
    &=\{\mathcal{L}_\text{out}(\theta_\tau-\mu\nabla_\theta\mathcal{L}_\text{in}(\theta_\tau,\beta_{\tau+1}))\\
    &\quad-\mathcal{L}_\text{out}(\theta_{\tau-1}-\mu\nabla_\theta\mathcal{L}_\text{in}(\theta_{\tau-1},\beta_{\tau+1}))\}\\
    &\quad+\{\mathcal{L}_\text{out}(\theta_{\tau-1}-\mu\nabla_\theta\mathcal{L}_\text{in}(\theta_{\tau-1},\beta_{\tau+1}))\\
    &\quad-\mathcal{L}_\text{out}(\theta_{\tau-1}-\mu\nabla_\theta\mathcal{L}_\text{in}(\theta_{\tau-1},\beta_{\tau}))\}\\
    &=\{\mathcal{L}_\text{out}(g(\theta_\tau,\beta_{\tau+1}))-\mathcal{L}_\text{out}(g(\theta_{\tau-1},\beta_{\tau+1}))\}\\
    &\quad+\{\mathcal{L}_\text{out}(g(\theta_{\tau-1},\beta_{\tau+1}))-\mathcal{L}_\text{out}(g(\theta_{\tau-1},\beta_{\tau}))\}
    \end{aligned}
\end{equation}
We remove $(s,a)$ in the $\mathcal{L}_\text{in}$ for notation convenience. For the first term,
\begin{equation}\label{eq:first-term-converge}
    \begin{aligned}
    &\mathcal{L}_\text{out}(g(\theta_\tau,\beta_{\tau+1}))-\mathcal{L}_\text{out}(g(\theta_{\tau-1},\beta_{\tau+1}))\\
    &\leq \nabla\mathcal{L}_\text{out}(g(\theta_{\tau-1},\beta_{\tau+1}))^T\Delta g+\frac{L}{2}||\Delta g||^2
    \end{aligned}
\end{equation}
where
\begin{equation}
    \begin{aligned}
    &\Delta g=g(\theta_\tau,\beta_{\tau+1})-g(\theta_{\tau-1},\beta_{\tau+1})\\
    &=[\theta_\tau-\mu\nabla_\theta\mathcal{L}_\text{in}(\theta_\tau,\beta_{\tau+1})]\\
    &\qquad-[\theta_{\tau-1}-\mu\nabla_\theta\mathcal{L}_\text{in}(\theta_{\tau-1},\beta_{\tau+1})]\\
    &=-\mu[\nabla_\theta\mathcal{L}_\text{in}(\theta_\tau,\beta_{\tau+1})+\nabla_\theta\mathcal{L}_\text{in}(\theta_{\tau-1},\beta_{\tau})\\
    &\qquad-\nabla_\theta\mathcal{L}_\text{in}(\theta_{\tau-1},\beta_{\tau+1})]\\
    \end{aligned}
\end{equation}
Since $\mathcal{L}_\text{in}$ has $\sigma$-bounded gradients, we take the norm on both sides, and use the triangle inequality, so
\begin{equation}
    \begin{aligned}
    ||\Delta g||\leq3\mu\sigma
    \end{aligned}
\end{equation}
Substitute this into Eqn.~\eqref{eq:first-term-converge}, we have
\begin{equation}
    \begin{aligned}
    &\mathcal{L}_\text{out}(g(\theta_\tau,\beta_{\tau+1}))-\mathcal{L}_\text{out}(g(\theta_{\tau-1},\beta_{\tau+1}))\\
    &\leq3\mu\sigma^2+\frac{9}{2}L\mu^2\sigma^2\\
    \end{aligned}
\end{equation}

And for the second term,
\begin{equation}
    \begin{aligned}
    &\mathcal{L}_\text{out}(g(\theta_{\tau-1},\beta_{\tau+1}))-\mathcal{L}_\text{out}(g(\theta_{\tau-1},\beta_{\tau}))\\
    &\leq \nabla_\beta\mathcal{L}_\text{out}(g(\theta_{\tau-1},\beta_{\tau}))^T(\beta_{\tau+1}-\beta_{\tau}) + \frac{L_1}{2}||\beta_{\tau+1}-\beta_{\tau}||^2\\
    &= -\alpha\nabla_\beta\mathcal{L}_\text{out}(g(\theta_{\tau-1},\beta_{\tau}))^T\nabla_\beta\mathcal{L}_\text{out}(g(\theta_{\tau},\beta_{\tau})) \\
    &+ \frac{L_1\alpha^2}{2}||\nabla_\beta\mathcal{L}_\text{out}(g(\theta_{\tau},\beta_{\tau}))||^2\\
    &=-(\alpha- \frac{L_1\alpha^2}{2})||\nabla_\beta\mathcal{L}_\text{out}(g(\theta_{\tau},\beta_{\tau}))||^2\\
    &+\alpha (\nabla_\beta\mathcal{L}_\text{out}(g(\theta_{\tau},\beta_{\tau}))-\nabla_\beta\mathcal{L}_\text{out}(g(\theta_{\tau-1},\beta_{\tau})))^T\nabla_\beta\mathcal{L}_\text{out}(g(\theta_{\tau},\beta_{\tau})) \\
    \end{aligned}
\end{equation}

Since $\nabla_\beta \nabla_\theta \mathcal{L}_\text{in}(\theta,\beta)$ is bounded by $\sigma_1$ and $\mathcal{L}$ has $\sigma$-bounded gradients, then
\begin{equation}
    \begin{aligned}
    &\nabla_\beta\mathcal{L}_\text{out}(g(\theta,\beta)) \\
    &=\nabla_\beta g(\theta,\beta)^T \nabla_g\mathcal{L}_\text{out}(g(\theta,\beta)) \\
    &=-\mu\nabla_\beta \nabla_\theta \mathcal{L}_\text{in}(\theta,\beta)^T \nabla_g\mathcal{L}_\text{out}(g(\theta,\beta)) \\
    &\leq \mu\sigma \sigma_1
    \end{aligned}
\end{equation}
So 
\begin{equation}\label{eq:second-term-converge}
    \begin{aligned}
    &\mathcal{L}_\text{out}(g(\theta_{\tau-1},\beta_{\tau+1}))-\mathcal{L}_\text{out}(g(\theta_{\tau-1},\beta_{\tau}))\\
    &\leq -(\alpha- \frac{L_1\alpha^2}{2})||\nabla_\beta\mathcal{L}_\text{out}(g(\theta_{\tau},\beta_{\tau}))||^2\\
    &\quad+2\alpha \mu\sigma \sigma_1 \\
    \end{aligned}
\end{equation}

Combining the two parts, we can derive that
\begin{equation}
    \begin{aligned}
    &\mathcal{L}_\text{out}(\theta_{\tau+1})-\mathcal{L}_\text{out}(\theta_\tau)\\
    &\leq-(\alpha-\frac{L_1\alpha^2}{2})||\nabla_\beta\mathcal{L}_\text{out}(g(\theta_{\tau},\beta_{\tau}))||^2\\
    &\quad+3\mu\sigma^2+\frac{9}{2}L\mu^2\sigma^2 + 2\alpha \mu\sigma \sigma_1 \\
    \end{aligned}
\end{equation}
Summing up both sides from $\tau=1$ to $T$, and rearranging the terms, we can derive that
\begin{equation}
    \begin{aligned}
    &\sum_{\tau=1}^T(\alpha-\frac{L_1\alpha^2}{2})||\nabla_\beta\mathcal{L}_\text{out}(\theta_\tau)||^2\\
    &\leq \mathcal{L}_\text{out}(\theta_1)-\mathcal{L}_\text{out}(\theta_{T+1})+T\left(3\mu\sigma^2+\frac{9}{2}L\mu^2\sigma^2 + 2\alpha \mu\sigma \sigma_1\right)\\
    \end{aligned}
\end{equation}
Since $\alpha-\frac{L_1\alpha^2}{2}\ge 0$, we have
\begin{equation}
\begin{small}
    \begin{aligned}
    &\min_\tau\mathbb{E}[||\nabla_\beta\mathcal{L}_\text{out}(\theta_t)||^2]\\
    &\leq\frac{\sum_{\tau=1}^T(\alpha-\frac{L_1\alpha^2}{2})||\nabla_\beta\mathcal{L}_\text{out}(\theta_t)||^2}{T(\alpha-\frac{L_1\alpha^2}{2})}\\
    &\leq\frac{1}{T\alpha(1-\frac{L_1\alpha}{2})}\left[\mathcal{L}_\text{out}(\theta_1)-\mathcal{L}_\text{out}(\theta_{T+1})\right.\\
    &\left.\quad+T\left(3\mu\sigma^2+\frac{9}{2}L\mu^2\sigma^2 + 2\alpha \mu\sigma \sigma_1\right)\right]\\
    &\leq\frac{1}{\alpha\sqrt{T}(\sqrt{T}-1)}\left[\mathcal{L}_\text{out}(\theta_1)-\mathcal{L}_\text{out}(\theta_{T+1})\right.\\
    &\left.\quad+T\left(3\mu\sigma^2+\frac{9}{2}L\mu^2\sigma^2 + 2\alpha \mu\sigma \sigma_1\right)\right]\\
    &=\frac{\mathcal{L}_\text{out}(\theta_1)-\mathcal{L}_\text{out}(\theta_{T+1})}{\alpha\sqrt{T}(\sqrt{T}-1)}+\frac{\sigma\mu\sqrt{T}}{\alpha(\sqrt{T}-1)}\left(3\sigma+\frac{9}{2}L\mu\sigma + 2\alpha \sigma_1\right)\\
    &=\frac{\mathcal{L}_\text{out}(\theta_1)-\mathcal{L}_\text{out}(\theta_{T+1})}{C_1(\sqrt{T}-1)}+\frac{\sigma C_2}{C_1(\sqrt{T}-1)}\left(3\sigma+\frac{9}{2}L\mu\sigma + 2\alpha \sigma_1\right)\\
    &=O\left(\frac{1}{\sqrt{T}}\right)
    \end{aligned}
    \end{small}
\end{equation}
The second inequality holds since $1-\frac{L_1\alpha}{2}\ge 1-\frac{1}{\sqrt{T}}$. 
\end{proof}

\begin{theorem}
    RK in Eqn.~(14) in the main text is Lipschitz. 
\end{theorem}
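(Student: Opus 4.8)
The plan is to reduce this to a one-dimensional, piecewise-elementary computation. Fix a pair of trajectories $\xi_i,\xi_j$; then the ground-truth sign $s:=\mathbb{I}[\eta_{\xi_i}>\eta_{\xi_j}]\in\{-1,+1\}$ is a constant, and $\mathrm{RK}$ depends on the learned returns $(\eta'_{\xi_i},\eta'_{\xi_j})$ only through the scalar $u:=\eta'_{\xi_i}-\eta'_{\xi_j}$. Since $(\eta'_{\xi_i},\eta'_{\xi_j})\mapsto u$ is linear with gradient of Euclidean norm $\sqrt2$, it suffices to show that, for each of the two values of $s$, the one-variable map $h_s(u):=\mathrm{RK}$ is Lipschitz; the Lipschitz constant of $\mathrm{RK}$ in its two score arguments is then at most $\sqrt2$ times that of $h_s$, and the ranking loss $\mathcal{L}_\text{out}$ of Eqn.~\eqref{eq:ranking-loss}, being a finite sum of such terms, inherits the same property.

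Next I would write $h_s$ out explicitly. For $s=+1$: on $|u|\le\epsilon$ the quantity $\tfrac{1}{4\epsilon}(u-\epsilon)^2$ is nonnegative, so the inner $\max(0,\cdot)$ is inactive and $h_{+1}(u)=\tfrac{1}{4\epsilon}(u-\epsilon)^2$; for $u>\epsilon$, $h_{+1}(u)=\max(0,-u)=0$; for $u<-\epsilon$, $h_{+1}(u)=\max(0,-u)=-u$. The case $s=-1$ is the reflection $h_{-1}(u)=h_{+1}(-u)$, that is, $\tfrac{1}{4\epsilon}(u+\epsilon)^2$ on $[-\epsilon,\epsilon]$, $0$ for $u<-\epsilon$, and $u$ for $u>\epsilon$. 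I then check the two break points $u=\pm\epsilon$: for $s=+1$ both adjacent pieces give value $0$ and slope $0$ at $u=\epsilon$, and value $\epsilon$ and slope $-1$ at $u=-\epsilon$, so $h_s\in C^1(\mathbb{R})$; differentiating on each piece shows $h'_{+1}$ takes values only in $[-1,0]$ (and $h'_{-1}$ only in $[0,1]$), hence $|h'_s(u)|\le1$ for all $u$. Therefore $h_s$ is $1$-Lipschitz, $\mathrm{RK}$ is $\sqrt2$-Lipschitz in $(\eta'_{\xi_i},\eta'_{\xi_j})$, and the theorem follows.

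There is no genuinely hard step; the only points needing care are (i) confirming the inner $\max(0,\cdot)$ introduces no kink in the regions where it is used --- the kink of $\max(0,-u)$ sits at $u=0$, which lies in the quadratic ``bridge'' region $|u|\le\epsilon$ and is never invoked in the outer branches --- and (ii) observing that the quadratic bridge was chosen precisely so that value and slope match the constant/linear branches at $u=\pm\epsilon$. As a remark I would record the slightly stronger fact actually used by Theorem~\ref{thm:rate}: one more differentiation gives $h''_s=\tfrac{1}{2\epsilon}$ on $(-\epsilon,\epsilon)$ and $0$ outside, so $h'_s$ is $\tfrac{1}{2\epsilon}$-Lipschitz, whence $\mathrm{RK}$ --- and therefore $\mathcal{L}_\text{out}$ --- is Lipschitz-smooth with a constant of order $1/\epsilon$, which is exactly what legitimizes the smoothness hypotheses of the convergence analysis for this implementation of \algo.
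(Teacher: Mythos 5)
Your proposal is correct and follows essentially the same route as the paper's own proof: an explicit piecewise computation of the derivative of $\mathrm{RK}$ in the scalar difference $u=\eta'_{\xi_i}-\eta'_{\xi_j}$, verification that value and slope match at the break points $u=\pm\epsilon$, and a bound on the second derivative by $\tfrac{1}{2\epsilon}$. Your closing remark is in fact the paper's concluding step --- the paper's proof, despite the theorem's wording, establishes Lipschitz \emph{smoothness} via the bounded second derivative --- so your write-up covers both that and the plain Lipschitz continuity of $\mathrm{RK}$, with the added (correct) bookkeeping of the $\sqrt2$ factor from the linear map $(\eta'_{\xi_i},\eta'_{\xi_j})\mapsto u$ and the observation that the kink of the inner $\max(0,\cdot)$ is never active.
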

\begin{proof}
We consider the case where $\eta_{\xi_i}>\eta_{\xi_j}$. The case for $\eta_{\xi_i}<=\eta_{\xi_j}$ can be demonstrated similarly. We prove that RK is Lipschitz by definition. Let $z = \xi'_i-\xi'_j$, the derivative of RK is 
$$
\triangledown RK_z=
\begin{cases}
0 & z>\epsilon, \\\\
\frac{1}{2\epsilon}(z-\epsilon) & -\epsilon\le z\le\epsilon, \\\\
-1 & z<-\epsilon
\end{cases}
$$
Thus, the second-order derivative of $RK$ satisfies that $|\triangledown (\triangledown RK_{z})_{z}|\le \frac{1}{2\epsilon}$. If we take $L>\frac{1}{2\epsilon}$, then $\forall z_1,z_2, |\triangledown(z_1)-\triangledown(z_2)|<L |z_1-z_2|$. This proves that RK is Lipschitz smooth.
\end{proof}

\section{Experiments}
In this section, we provide additional experimental details and results. 
\subsection{Experimental Details}
We first go over the implementation details. 
We will then describe the environments and details of running experiments including train/validation/test splits, the number of evaluation runs, the average time for each run, and the used computing infrastructure.

\subsubsection{Implementation Details}
We implement CAIL based on a PPO-based AIRL. The actor and the critic are neural networks with two hidden layers with size $64$ and $\text{Tanh}$ as the activation function, and the discriminator is a neural network with two hidden layers with size $100$ and $\text{ReLU}$ as the activation function. We use ADAM to update the imitation learning model and Stochastic Gradient Descent method (SGD) to update the confidence. We implement our method in the PyTorch framework~\cite{paszke2019pytorch}. We train each algorithm $10$ times with different random seeds, and record how the expected return and the standard deviation varies during training. While testing the return, we run the algorithm for $100$ episodes. While implementing Eqn.~(11), we normalize $\beta$ so that their mean value is 1, i.e. for the first part of Eqn.~(11), we use $\sum_{(s,a)\in\Xi}\left(-\frac{n\beta(s,a)}{\sum_{(s,a)\in\Xi}\beta(s,a)}\right)\log(D(s,a))$, where $n$ is the number of state-action pairs in $\Xi$. All the experiments in all environments are run on one Intel(R) Xeon(R) Gold $6244$ CPU @ $3.60$GHz with $10$G memory.

\subsubsection{Environment}
\noindent \textbf{Reacher.} In the Reacher environment, the agent is an arm with two links and one joint, and the end effector of the arm is supposed to reach a final location. Each step, the agent is penalized for the energy cost and the distance to the target. 

We collect $200$ trajectories in total for training, where each trajectory has $50$ interaction steps. $5\%$ of the trajectories are annotated with rankings. We collect $5$ trajectories for testing. We run the experiment for $5$ runs and compute the mean and the standard deviation of the expected return. The average time for each run is $1,\!291$s.

\noindent \textbf{Ant.} In the Ant environment, the agent is an ant with four legs and each leg has two links and two joints. Its goal is to move forward in the x-axis direction as fast as possible. Each step, the agent is rewarded for moving fast in the x-axis direction without falling down, while it is penalized for the energy cost. If the ant fails to stand, the trajectory will be terminated. 

We collect $200$ trajectories in total for training, where each trajectory has at most $1000$ interaction steps. $5\%$ of the trajectories are annotated with rankings. We collect $5$ trajectories for testing. We run the experiment for $5$ runs and compute the mean and the standard deviation of the expected return. The average time for each run is $17,\!140$s.

\noindent \textbf{Simulated Robot Arm.} In this environment, there is a Franka Panda Arm that is supposed to pick up a bottle, avoid the obstacle, and put the bottle on a target platform. Each step, the agent is penalized for the energy cost and the distance to the target. If the agent drops the bottle or hits the target, it will receive a large negative reward and the trajectory will be terminated. If the agent succeeds to make the bottle stand on the target, the trajectory will be terminated too, so that the arm will no longer receive penalization. The reachable region of the arm is $[0.20, 0.80]$ in x-axis, and $[-0.35,0.35]$ in y-axis. The initial position of the bottle is sampled in $[0.68,0.72]\times[-0.05,0]$ and the initial position of the target is sampled in $[0.28,0.32]\times[-0.32,-0.28]$. The action space is the velocity of the end-effector, and the maximum velocity is $1$ in each direction. 

We collect $200$ trajectories in total for training, where each trajectory has at most $2000$ interaction steps. $5\%$ of the trajectories are annotated with rankings. We collect $5$ trajectories for testing. We run the experiment for $5$ runs and compute the mean and the standard deviation of the expected return. The average time for each run is $44,\!731$s.  

\noindent \textbf{Real Robot Arm.} In this environment, we use a real UR5e robot arm in a similar settings as the simulation environment. 

We collect $200$ trajectories in total for training, where each trajectory has at most $2000$ interaction steps. $5\%$ of the trajectories are annotated with rankings. We collect $5$ trajectories for testing. We run the experiment for $5$ runs and compute the mean and the standard deviation of the expected return. The average time for each run is $141,\!699$s.

\subsubsection{Evaluation Metrics}
To evaluate the proposed CAIL and other methods, we use the expected return for all the environments, which is the discounted cumulative reward of a trajectory. For the Reacher and the Ant environments, we use the reward function in their original implementation in Gym\footnote{https://github.com/openai/gym}. For the Simulated Robot Arm and the Real Robot Arm environments, we define a reward as follows: Assume that the action of the robot arm (the velocity of the end-effector) is $a$, the distance between the bottle and the target is $d$, the distance between the bottle's initial position and the target is $d_\mathrm{init}$, then at each step, the robot will receive a reward of $-\frac{0.02a}{d_\mathrm{init}^2}-0.05d$. If the robot drops the bottle or the obstacle is moved, the robot will receive a reward of $-2000$ and the trajectory will be terminated. In the robot arm environments (both simulated and real), we also use the success rate as another metric to evaluate the rate that the robot arm successfully moves the bottle to the goal area without colliding with the obstacle.

\subsection{Results}
\begin{table}[ht]
    \centering
    \caption{The converged expected return of all the methods in Mujoco Reacher and Ant, Simulated Franka Panda Robot Arm, and the Real UR5e Robot Arm environments. We provide numerical results for a clearer comparison.}
    \label{tab:numerical_results}
    \begin{tabular}{c|cccc}
    \toprule
        Method & Reacher & Ant & Simulated Robot & Real Robot
        \\
        \midrule
        CAIL & \textbf{-7.816}$\pm$1.518 & \textbf{3825.644}$\pm$940.279 & \textbf{-62.946}$\pm$50.644 & \textbf{-34.330}$\pm$1.242 \\
        2IWIL & -23.055$\pm$3.803 & 3473.852$\pm$271.696 & -120.622$\pm$122.787 & -52.445$\pm$7.182 \\
        IC-GAIL & -55.355$\pm$5.046 & 1525.671$\pm$747.884 & -349.511$\pm$342.597 & -550.235$\pm$657.838 \\
        AIRL & -25.922$\pm$2.337 & 3016.134$\pm$1028.894 & -236.953$\pm$230.495 & -597.819$\pm$752.149 \\
        GAIL & -60.858$\pm$3.299 & 998.231$\pm$387.825 & -527.604$\pm$452.379 & -532.854$\pm$664.415 \\
        T-REX & -66.371$\pm$21.295 & -1867.930$\pm$318.339 & -1933.944$\pm$380.834 & -2003.672$\pm$32.771 \\
        D-REX & -78.102$\pm$14.918 & -2467.779$\pm$135.175 & -1817.239$\pm$481.672 & -1538.100$\pm$703.266 \\
        SSRR & -70.044$\pm$14.735 & -105.346$\pm$210.837 & -2077.616$\pm$58.764 & -2154.214$\pm$168.086\\
        Oracle & -4.312 & 4787.233 & -35.362 & -31.056 \\
        \bottomrule
    \end{tabular}
\end{table}
\noindent \textbf{Numerical Comparison.}
We provide the numerical comparison of CAIL and the baseline methods in Table~\ref{tab:numerical_results}. The results correspond to the results in Fig. 2 in the main text. We can observe that CAIL outperforms all the baseline methods in all the environments and the margin between CAIL and the best-performing policy is much closer than the margin between baseline methods and the best-performing policy.

\begin{table}[ht]
    \centering
     \caption{Success rate (\%) among $100$ trials of all the methods in the simulated and real robot environments.}
    \label{tab:success_rate}
    \begin{tabular}{c|cccccccc}
    \toprule
    Method & CAIL & 2IWIL & IC-GAIL & AIRL & GAIL & T-REX & D-REX & SSRR \\
    \midrule
    Simulated Robot & 100 & 100 & 81 & 87 & 31 & 0 & 0 & 0 \\
    Real Robot & 100 & 83 & 7 & 33 & 20 & 0 & 0 & 0 \\
    \bottomrule
    \end{tabular}
\end{table}

\noindent \textbf{Success Rate.}
We report the success rate among $100$ trials of different methods in Table~\ref{tab:success_rate}. We observe that for both simulated and real robot environments, CAIL achieves the highest success rate. Though 2IWIL also achieves a high success rate; however, it induces trajectories with longer detour and thus has lower expected return.

\begin{table}[ht]
    \centering
    \addtolength{\tabcolsep}{-4pt}
    \caption{The performance with respect to the size of the ranking dataset.}
    \label{tab:size_ranking}
    \resizebox{\textwidth}{!}{
    \begin{tabular}{c|ccccccc}
    \toprule
        Label Ratio & 1\% & 2\% & 5\% & 10\% &  20\% &  50\% & 100\%  \\
        \midrule
        2IWIL & $-33.5\pm 4.9$ & $-34.4\pm 3.2$ & $-23.3\pm 4.1$ & $-27.7\pm 6.7$ & $-24.5\pm 3.0$ & $-30.0\pm 2.7$ & $-25.2\pm 6.9$ \\
        IC-GAIL&$-56.4\pm 10.1$&$-53.7\pm 4.0$&$-61.0\pm 5.0$&$-54.1\pm 6.0$&$-58.8\pm 3.4$&$-44.6\pm 8.3$&$-57.1\pm 3.7$\\
        T-REX&$-83.7\pm 18.6$&$-85.8\pm 15.3$&$-82.3\pm 10.2$&$-73.2\pm 21.6$&$-91.8\pm 15.4$&$-38.6\pm 35.8$&$-27.2\pm 37.2$ \\
        CAIL & $\mathbf{-8.0} \pm 2.4$  &  $\mathbf{-8.7}\pm 3.6$ & $\mathbf{-7.3}\pm 2.0$ &$\mathbf{-8.1}\pm 2.9$ & $\mathbf{-7.1}\pm 1.7$ & $\mathbf{-7.5}\pm 2.3$ & $\mathbf{-7.8}\pm 3.0$ \\
        \bottomrule
    \end{tabular}}
\end{table}

\noindent\textbf{Ablating the size of ranking dataset} We provide results of CAIL and the compared methods including 2IWIL, IC-GAIL, and T-REX with varying levels of supervision. We do not include GAIL, AIRL, D-REX, and SSRR in this ablation since they do not require any supervision. We conduct experiments in the Reacher environment, and vary the ratio of demonstrations labeled with ranking. The agents are provided with 200 trajectories, and the ratios of labeled demonstrations are $1\%$, $2\%$, $5\%$, $10\%$, $20\%$, $50\%$, $100\%$. The average trajectory rewards and the standard deviations are shown in the Table~\ref{tab:size_ranking}. CAIL outperforms all the other methods with a large gap in all settings, even in the setting with only $1\%$ labeled demonstrations, i.e., only two trajectories are labeled, which is the minimum label we can have for ranking. 2IWIL and IC-GAIL, however, do not perform well, and there is no clear increase of performance as the label ratio increases. This is because what they need is labeled confidence, which is a much stronger type of supervision than ranking. The confidence cannot be accurately recovered when only given rankings. T-REX does not perform well either, but it is getting better as the ratio of labels increases. This experimentally proves that T-REX needs much more data than CAIL to learn a reward function and CAIL can use the demonstrations more efficiently. 

\begin{figure}
    \centering
    \subfigure[Visualization of Confidence]{\includegraphics[width=.32\textwidth]{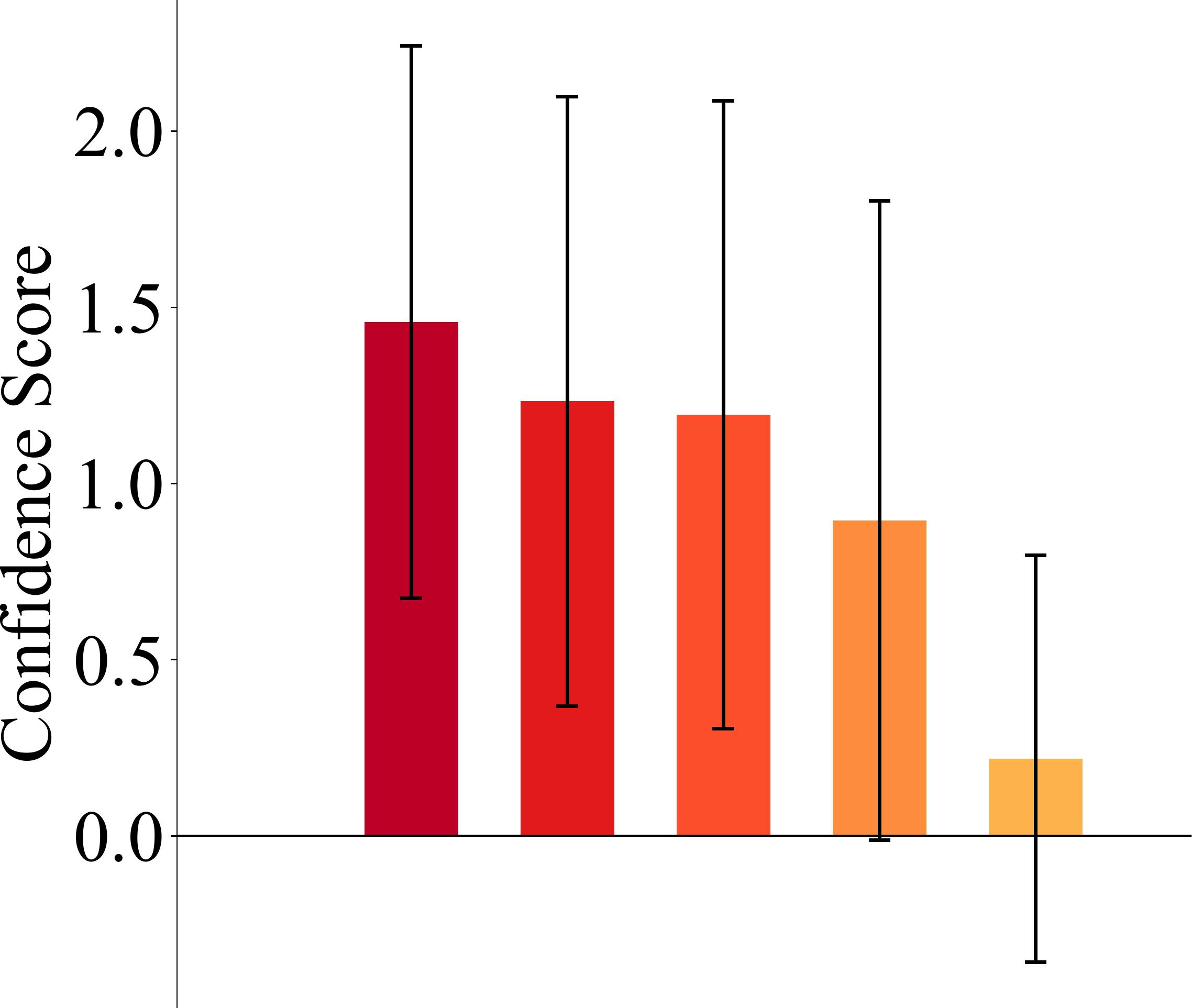}\label{fig:vis_conf}}
    \subfigure[$\log(\alpha)$]{\includegraphics[width=.32\textwidth]{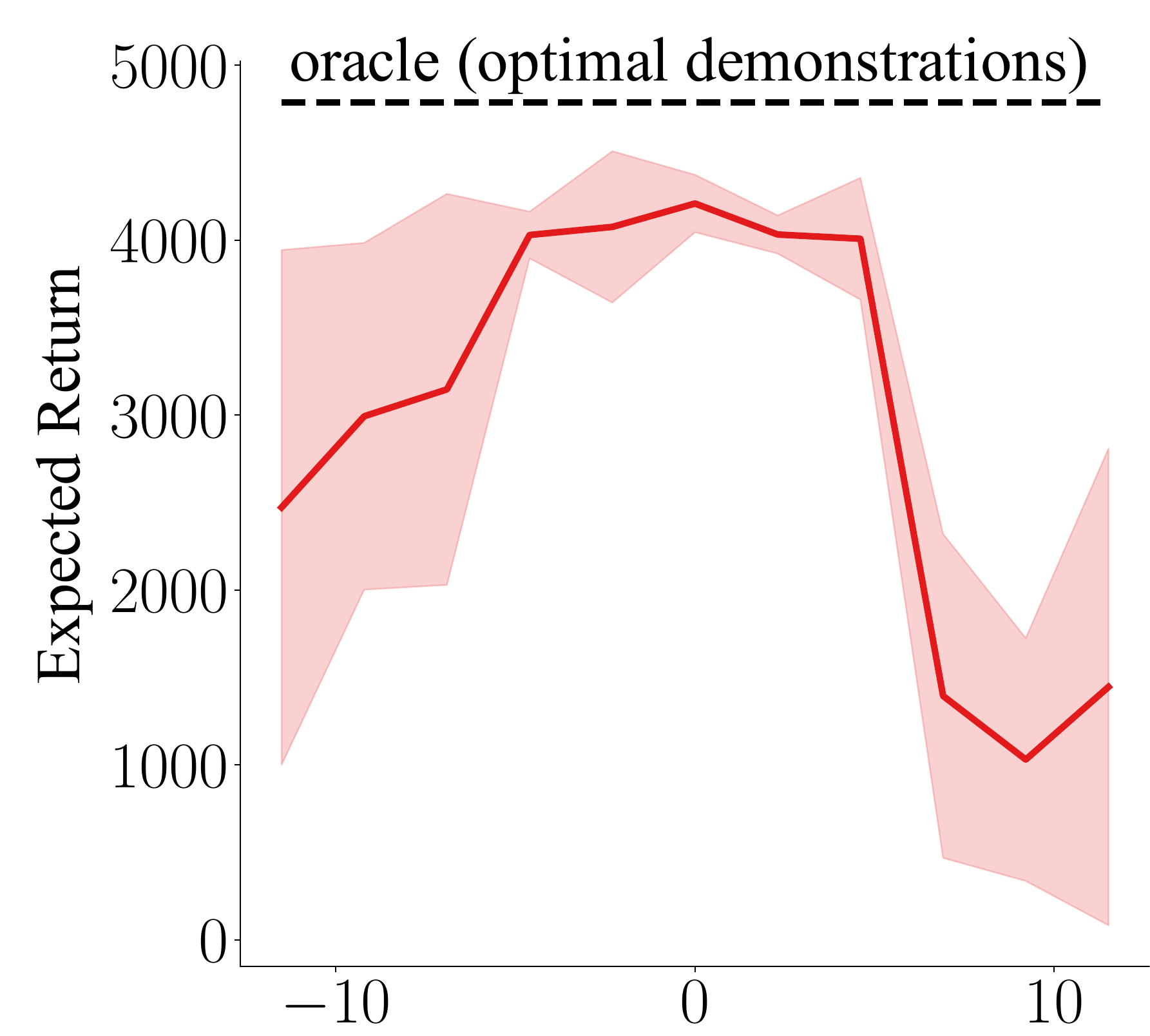}\label{fig:varying_alpha}}
    \subfigure[$\log(\mu)$]{\includegraphics[width=.32\textwidth]{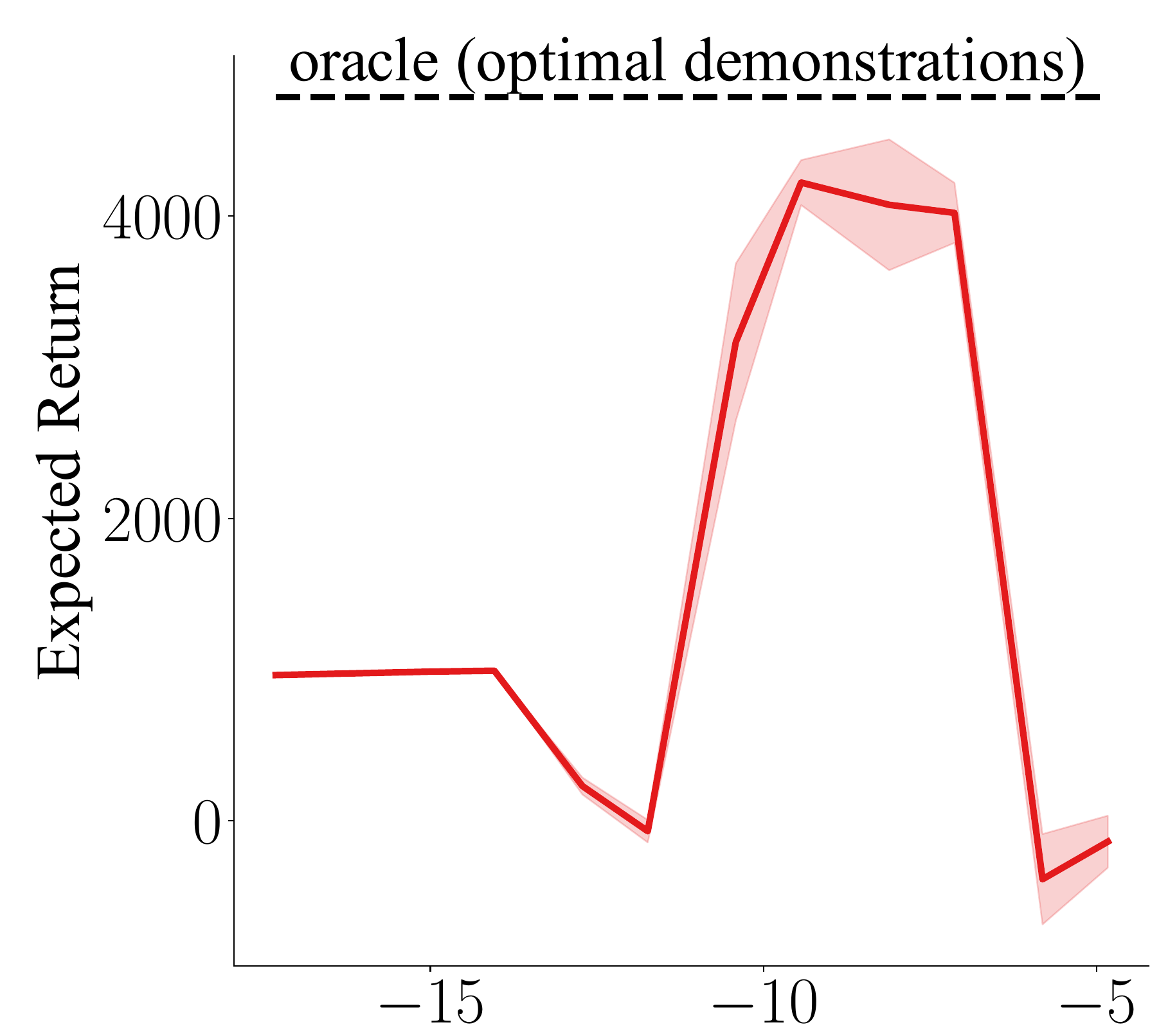}\label{fig:varying_mu}}
    \caption{(a) The visualization of confidence for demonstrations drawn from policies with different optimality. There are $5$ policies with different optimality, where the darker color means the policy has higher expected return. (b-c) The expected return with varying hyper-parameters $\alpha$ and $\mu$.}
    \label{fig:varying_hyper}
\end{figure}

\noindent \textbf{Visualization of the Confidence}
In our experiments, we have $5$ sets of demonstrations collected from $5$ different policies, where each set of demonstrations has different average returns. So we can learn different average confidence values for each different set of demonstrations. The larger the average return, the larger the average confidence. In our framework, we learn a confidence for each state-action pair. We visualize the un-normalized confidence learned by CAIL of these $5$ set of demonstrations in Fig.~\ref{fig:vis_conf}, where the darker color means the demonstrations have higher expected returns. We observe that the darker color bar has higher confidence, which indicates that CAIL-learned confidence matches the optimality of the demonstrations.

\noindent \textbf{Hyper-parameter Sensitivity.}
We investigate the sensitivity of hyper-parameters including the two learning rates $\alpha$ and $\mu$. We aim to demonstrate two points: (1) The proposed approach can work stably with the hyper-parameters falling into a specific range; (2) If the hyper-parameters are too large or too small, the performance can drop, which means that tuning the two hyper-parameters are necessary for the performance of our algorithm. We conduct experiments in the Ant environment. As shown in Figure~\ref{fig:varying_alpha} and~\ref{fig:varying_mu}, the proposed approach work stably with $\alpha$ in the range $[10^{-3}, 100.0]$, and with $\mu$ in the range $[3\times 10^{-5}, 3\times 10^{-4}]$. When $\alpha$ and $\mu$ are too large or too small, the performance drops. The observations demonstrate the two points introduced above.

\noindent \textbf{Videos for Real Robot.}
We show the videos of experiments in the real UR5e robot arm environment in the file `robot\_video.mp4' in the supplementary materials.

\end{document}